\newcommand{\leqnomode}{\tagsleft@true\let\veqno\@@leqno}
\newcommand{\reqnomode}{\tagsleft@false\let\veqno\@@eqno}
\newtheorem{theorem}{Theorem}
\newtheorem{lemma}[theorem]{Lemma}
\newtheorem{corollary}{Corollary}
\newcommand{\comm}[1]{}
\numberwithin{equation}{section}
\newcommand{\ignore}[1]{ }
\begin{document}

\begin{frontmatter}

\title{A Concentration Bound for LSPE($\lambda$)}

\author[SC]{Siddharth Chandak}
\address[SC]{Department of Electrical Engineering, Stanford University, Stanford, CA 94305, USA}
\ead{chandaks@stanford.edu}

\author[VSB]{Vivek S Borkar\corref{cor1}\fnref{fn1}}
\ead{borkar.vs@gmail.com}
\address[VSB]{Department of Electrical Engineering,  Indian Institute of Technology Bombay,  Powai, Mumbai-400076, India} 

\author[VSB]{Harsh Dolhare}
\ead{harshdolhare99@gmail.com}

\fntext[fn1]{ VSB is supported by the S.\ S.\ Bhatnagar Fellowship from the Council of Scientific and Industrial Research, Government of India.}
\cortext[cor1]{Corresponding author}

\begin{abstract}
The popular LSPE($\lambda$) algorithm for policy evaluation is revisited to derive a concentration bound that gives high probability performance guarantees from some time on. 
\end{abstract}

\begin{keyword}
reinforcement learning \sep policy evaluation \sep LSPE($\lambda$) \sep concentration bound
\end{keyword}

\end{frontmatter}

\linenumbers

\section{Introduction}
The LSPE($\lambda$) is a family of policy evaluation schemes parametrized by a parameter $\lambda \geq 0$. It typically gives faster convergence in terms of number of iterates at the expense of a somewhat higher per iterate computational budget. Its convergence properties have been extensively analyzed in \cite{LSPEBert}, \cite{LSPEBertBor}, \cite{YuHu2009}. The aim of this article is to give concentration bounds for the iterates to be in a prescribed neighborhood of the goal with a prescribed high probability, from some time $n_0$ on. The choice of this $n_0$ is dictated by how small the decreasing stepsize is required to be for the result to hold. This is in contrast to the usual finite sample analysis that gives a high probability or moment bound for this discrepancy after $n$ steps, starting from time zero, and can be viewed as playing a complementary role. 

The literature on concentration inequalities for reinforcement learning is building up rapidly. Some representative works are \citep{Bhandari, Chen, Chen2, Giannakis,Gugan3,Gugan2,  Li-QL1, Li-QL2, Mansour, Prashanth, Qu, Sidford, Srikant, Wain1, Wain2}, which deal with finite time bounds for  Q-learning and/or TD$(\lambda)$ algorithms. 
 There are also  works which seek bounds, either finite time or asymptotic (e.g., in terms of regret) on the difference between the value function under the learned policy and the optimal value function \citep{Chi_Jin, Yang, Lin_Yang}.
 Our result is in the  spirit of a different strand in stochastic approximation theory (of which most reinforcement learning algorithms are special instances), which aims for  a concentration bound for the iterates \textit{from some time on}, to be precise, `\textit{for all $n \geq n_0$ for a suitably chosen $n_0$}' \citep{Borkar0, Borkar-conc, Chandak, Gugan, Kamal}.  See \citep{Borkar-conc, Chandak, BorkarQ} for  applications to reinforcement learning. In addition to this difference in formulation, this work also differs significantly from these works in the flavor of the proofs, in particular in its use of a recent concentration inequality from \cite{Paulin}. See \cite{Bucsoniu,  lazaric12} for further background on least squares algorithms. 

The article is organized as follows. The next section gives the background on the LSPE$(\lambda)$ algorithm and states the main result.  The proof of the main result follows in section \ref{sec:proof}, which develops the argument through a sequence of lemmas that are separately listed without proof in subsection \ref{subsec:lemmata} for ease of reading. The rather technical proofs follow in \ref{app:proofs}. Section \ref{sec:conv} presents the proof for the almost sure convergence of our algorithm. While the convergence is a known fact, this alternative argument has embedded in it a kind of regret bound, though possibly not the best one. The concluding section \ref{sec:conc} gives some pointers for future work. Appendix A states some standard facts used in the proof, such as  the key concentration inequality from \cite{Paulin} used in the proof and some linear algebraic facts. Appendix B, as mentioned above, details the more technical parts of the proof of the main result.

\section{Background and Main Result}\label{sec:main}
The LSPE($\lambda$) algorithm is for policy evaluation, i.e.,  we fix the policy a priori and do not optimize over policies. This gives us a time-homogeneous uncontrolled Markov chain $\{X_n\}$ with transition probabilities $p(\cdot| \cdot)$. Let $S$ be the state space of this chain and $s =|S|$. We assume that the chain is irreducible with a unique stationary distribution denoted by $ \pi = [\pi(1),\pi(2),\dots,\pi(s)]$. We denote by $D$ the $s \times s$ diagonal matrix whose $i^{th}$ diagonal entry is $\pi(i)$. When a transition from state $i$ takes place, we incur a cost $k(i)$. We want to evaluate the long term discounted cost for each state given by the `value function'
$$
V(i) = \mathbb{E}\left[\sum_{m=0}^{\infty}\alpha^m k(X_m)\Big|X_0 = i\right], i \in S.
$$
Here $\alpha$ is a discount factor with $0<\alpha<1$. Using `one step analysis', $V(\cdot)$ is seen to satisfy the linear Bellman  equation
$$V(i) = k(i) + \alpha\sum_{j\in S}p(j|i)V(j), i \in S.$$
This suffers from the `curse of dimensionality' when $|S| >> 1$. To alleviate it, we may approximate $V$ as a linear combination of $M$ prescribed basis function $\phi_i:S\to \mathbb{R}$, $1\leq i\leq M$. Thus, $V(i) \approx \sum_{j=1}^{M}r_j\phi_j(i)$, written in vector form as $V = \Phi r$ where $ r = [r_1,r_2,\dots,r_M]^T$ and $\Phi$ is a $s \times M$ matrix whose $i^{th}$ column is $\phi_i$. The approximate Bellman equation then is
$$\Phi r \approx k+\alpha P\Phi r,$$
where $P \coloneqq [[p(j|i)]]$ is the transition matrix. 

The LSPE($\lambda$), as described in \cite{LSPEBert}, iteratively updates the weight vector $r_n$ to minimize the squared norm error between the right and left hand sides of the approximate Bellman equation. Let $\varphi(i)$ denote the $i$th row of $\Phi$, transposed to make it a column vector. The update can be written as
\begin{equation}\label{LSPE-update}
 r_{n+1}=r_n+a(n)B_n^{-1}(A_nr_n+b_n), 
\end{equation}
where, for a small $\varrho > 0$ and $I :=$ the identity matrix,
\begin{equation}\nonumber
    B_n = \frac{1}{n+1}\left(\varrho I + \sum_{m=0}^{n}\varphi(X_m)\varphi(X_m)^T\right),
\end{equation}
\begin{equation}\label{def:A_n-and-B_n}
    A_n = \frac{1}{n+1}\sum_{m=0}^{n}z_m(\alpha\varphi(X_{m+1})-\varphi(X_m))^T,
\end{equation}

\begin{equation}\nonumber
    b_n = \frac{1}{n+1}\sum_{m=0}^{n}z_mk(X_m), 
\end{equation}
\begin{equation}\label{def:b_n-and-z_m}
\mbox{where} \ \    z_m = \sum_{t=0}^{m} (\alpha \lambda)^{m-t}\varphi(X_t).
\end{equation}

Observe that our definitions of $A_n, B_n, b_n$ are normalized versions of those in \cite{LSPEBert}.  $\{a(n)\}$ denote the stepsizes which are assumed to satisfy the usual Robbins-Monro conditions $\sum_na(n) = \infty, \sum_na(n)^2<\infty$. We additionally assume that $a(n)<1$ for all $n$.  It should be mentioned here that $((n+1)B_n)^{-1}$ is usually computed recursively using the Sherman-Morrison-Woodbury formula \cite{loan_charles}. This recursion is initiated with initial condition given by a small constant $\varrho$ times the identity matrix $I$ to ensure invertibility. The error caused by this choice gets washed away asymptotically because of convergence of the iterates and averaging thereof.

It is shown in \cite{LSPEBert} that LSPE($\lambda$) converges to $r^{*}$ defined as the solution to the linear system
\begin{equation}\label{def:r^{*}}
    Ar^{*}+b =0,
\end{equation}
where $A$ and $b$ are given by
\begin{equation}\nonumber
    A = \Phi^TD(\alpha P-I)\sum_{j=0}^{\infty} (\alpha \lambda P)^j\Phi,
\end{equation}

\begin{equation}\label{def:Aandb}
    b = \Phi^TD\sum_{j=0}^{\infty} (\alpha \lambda P)^j\Bar{k}.
\end{equation}
for $\Bar{k}(i) \coloneqq$ the vectorized $k(\cdot)$ and $P = [[p(j|i)]]_{i,j }$.

Let $B = \Phi^TD\Phi$ and define $N = I + B^{-1}A$. It is proved in \cite{LSPEBertBor} that magnitude of all eigenvalues of N is less than 1. 
{\color{black}
Define $H\in\mathbb{R}^{M\times M}$ as the unique positive definite solution to the Lyapunov equation
$$N^THN-H=-I.$$
Then the matrix $N$ is a contraction w.r.t.\ the norm induced by $H$, i.e., $\|Nx\|_H\leq \beta\|x\|_H$, where $\beta \in (0,1)$ is the contraction factor. The existence of such a matrix $H$ and the contractivity are shown in Lemma \ref{lemma:H-contra} in \ref{app:prelims}.
}

{\color{black}
We make an additional assumption on the stepsize sequence $a(n)$, viz., 
$$\frac{\mu_1}{n}\leq a(n) \leq \mu_3 \left(\frac{1}{n}\right)^{\mu_2}$$ 
for all $n\geq 1$, where $\mu_1,\mu_3>0$ and $\frac{1}{2}+\theta<\mu_2\leq 1$. Here $\theta\in(0,1/2)$. For $n\geq m  \geq 0$ and $\epsilon>0$, we define:
\begin{eqnarray*}
b_{m}(n) &\coloneqq&\sum_{k=m}^{n}a(k), \\
\xi_n(\epsilon) &\coloneqq& \epsilon (n-1)^{1/2+\theta-\mu_2}+(n-1)^{-\mu_2}.
\end{eqnarray*}
Note that for a fixed $\epsilon$, $\xi_n(\epsilon)\downarrow 0$ as $n\uparrow \infty$.}
Let $\lambda_{max}(\cdots)$, $\lambda_{min}(\cdots)$ and $\sigma_{max}(\cdots)$ denote the largest eigenvalue, the smallest eigenvalue, and the largest singular value of the matrix `$\cdots$', respectively. Further, let us denote by $k_{max} = \|k\|_\infty$ and $\Phi_{max} \coloneqq \|\Phi\|_\infty$. For any matrix $A$, we define the matrix norm $\|A\|$ as the operator norm, i.e., $\|A\|=\sup_{\|x\|=1}\|Ax\|$. 

We now state our main result.
{\color{black}
\begin{theorem}\label{main-thm}
Let $n_0, \epsilon>0$ and $0<\delta<1$. Then there exist finite positive constants $K_1,\ldots,K_4$, such that for $n_0$ large enough and $\delta$ small enough to satisfy  $\beta +\delta + \frac{K_1}{n_0+1} < 1$ and $n \geq n_0$, the inequality
    \begin{align*}
   \|r_n-r^*\|_H \leq e^{-(1-\beta)b_{n_0}(n-1)}\|r_{n_0}-r^{*}\|_{H} + \frac{K_3\left(\|r_{n_0}\|_H+1\right)\xi_n(\epsilon)}{1-(\beta + \delta +  \frac{K_1}{n_0+1})},
\end{align*}
holds with probability exceeding 
\begin{equation*}
1-8M^2\sum_{k=n_0}^{n-1}\left(\exp\left(\frac{-(k+1)\delta^2}{M^2K_2}\right) + \exp\left(\frac{-(k+1)^{2\theta}\epsilon^2}{M^2K_4}\right) \right).
\end{equation*}
\noindent In particular,
    \begin{align}\label{main-thm-bound}
   \|r_n-r^*\|_H \leq e^{-(1-\beta)b_{n_0}(n-1)}\|r_{n_0}-r^{*}\|_{H} + \frac{K_3\left(\|r_{n_0}\|_H+1\right)\xi_n(\epsilon)}{1-(\beta + \delta +  \frac{K_1}{n_0+1})},
\end{align}
$\forall \; n\geq n_0$ with probability exceeding
\begin{equation}\label{main-thm-prob}
1-8M^2\sum_{k=n_0}^{\infty}\left(\exp\left(\frac{-(k+1)\delta^2}{M^2K_2}\right) + \exp\left(\frac{-(k+1)^{2\theta}\epsilon^2}{M^2K_4}\right) \right).
\end{equation}
\end{theorem}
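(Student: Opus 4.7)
The plan is to linearize the LSPE($\lambda$) recursion around $r^{*}$, extract a clean contractive dynamics governed by $N$, and then control the residual perturbations via the Paulin concentration inequality for Markov chains cited in Appendix A. Writing $e_n := r_n - r^{*}$ and using $Ar^{*}+b=0$, the update \eqref{LSPE-update} rearranges as
\[
e_{n+1} = \bigl[(1-a(n))I + a(n)N\bigr] e_n + a(n)\,\Delta_n e_n + a(n)\,\eta_n,
\]
where $\Delta_n := B_n^{-1}A_n - B^{-1}A$ and $\eta_n := B_n^{-1}\bigl[(A_n-A)r^{*} + (b_n-b)\bigr]$. Taking $\|\cdot\|_H$ and invoking the contraction $\|Nx\|_H \le \beta\|x\|_H$ (Lemma~\ref{lemma:H-contra}) gives the perturbed inequality
\[
\|e_{n+1}\|_H \le (1 - a(n)(1-\beta))\|e_n\|_H + a(n)\|\Delta_n\|_H\|e_n\|_H + a(n)\|\eta_n\|_H.
\]

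The second step is to establish, for each $k \ge n_0$, the high-probability concentration bounds $\|A_k - A\|$, $\|B_k - B\|$, $\|b_k - b\| \lesssim \xi_{k+1}(\epsilon)$ via Paulin's inequality. The two components of $\xi_n(\epsilon)$ correspond to distinct sources: the $\epsilon(n-1)^{1/2+\theta-\mu_2}$ term is the statistical fluctuation delivered by Paulin at confidence $\exp(-(n+1)^{2\theta}\epsilon^2 / K)$, while the $(n-1)^{-\mu_2}$ term absorbs the deterministic bias from the $\varrho I$ regularization in $B_n$ and from the transient in the stationary approximation. From these, one obtains $\|\Delta_n\|_H \le \delta + K_1/(n+1)$ and $\|\eta_n\|_H \le K_3(\|r_{n_0}\|_H + 1)\xi_{n+1}(\epsilon)$ on an event whose complement probability, after a union bound over $k \ge n_0$, matches the expression \eqref{main-thm-prob}. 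I expect these auxiliary estimates to be isolated as the lemmas in Section \ref{subsec:lemmata}, with the technical proofs deferred to \ref{app:proofs}.

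The third step is the inductive closure. On the joint concentration event the recursion reduces to
\[
\|e_{n+1}\|_H \le (1-a(n)(1-\beta))\|e_n\|_H + a(n)\bigl(\delta + \tfrac{K_1}{n_0+1}\bigr)\|e_n\|_H + a(n)\,K_3(\|r_{n_0}\|_H+1)\xi_n(\epsilon).
\]
I would prove \eqref{main-thm-bound} by induction on $n$: the contractive factor $(1-a(n)(1-\beta))$ accumulates, via $1-x \le e^{-x}$, to $e^{-(1-\beta)b_{n_0}(n-1)}$ multiplying $\|r_{n_0}-r^{*}\|_H$, whereas substituting the inductive bound into the $\|\Delta_n\|_H\|e_n\|_H$ term and summing a geometric-type series produces precisely the denominator $1-(\beta+\delta+K_1/(n_0+1))$ multiplying $\xi_n(\epsilon)$. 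The tail probability \eqref{main-thm-prob} then comes from a single union bound over the concentration events of Paragraph~2.

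The principal obstacle will be handling the eligibility trace $z_m = \sum_{t=0}^m(\alpha\lambda)^{m-t}\varphi(X_t)$, which depends on the \emph{entire} past of the chain through geometrically decaying weights. Consequently the summands defining $A_n$ and $b_n$ are not functions of a bounded window of $\{X_m\}$, so Paulin's inequality does not apply verbatim. The natural remedy is to truncate $z_m$ at a logarithmic horizon, bounding the tail deterministically via $\sum_{t \le m-\tau}(\alpha\lambda)^{m-t} \lesssim (\alpha\lambda)^{\tau}$, and to apply the concentration inequality to the resulting truncated functional of a finite block of the chain. The truncation bias is then absorbed into the $(n-1)^{-\mu_2}$ component of $\xi_n(\epsilon)$, and the sub-linear exponent $(k+1)^{2\theta}$ appearing in \eqref{main-thm-prob}, rather than a $(k+1)$, reflects exactly the price paid in statistical rate for this truncation-fluctuation trade-off.
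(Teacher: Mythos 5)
Your decomposition is genuinely different from the paper's: you linearize directly around $r^*$ and track $e_n=r_n-r^*$, whereas the paper introduces an auxiliary noiseless sequence $y_{n+1}=y_n+a(n)B^{-1}(Ay_n+b)$ started at $y_{n_0}=r_{n_0}$ and splits $\|r_n-r^*\|_H\le\|r_n-y_n\|_H+\|y_n-r^*\|_H$. This is not a cosmetic difference, and it is where your inductive closure has a gap. In your recursion the perturbation $a(n)\|\Delta_n\|_H\|e_n\|_H$ multiplies the \emph{full} error, including the initial error $\|r_{n_0}-r^*\|_H$; folding it into the contraction gives the factor $1-a(n)\bigl(1-\beta-\delta-\tfrac{K_1}{n_0+1}\bigr)$, which accumulates to $e^{-(1-\beta-\delta-K_1/(n_0+1))b_{n_0}(n-1)}$ on the first term, not the claimed $e^{-(1-\beta)b_{n_0}(n-1)}$. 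Your alternative of "substituting the inductive bound" into $\|\Delta_n\|_H\|e_n\|_H$ does not close either: the substituted piece proportional to $\|r_{n_0}-r^*\|_H$ re-enters the sum at every step and is not absorbed back into a single exponential with rate $1-\beta$. The paper avoids this precisely because $r_{n_0}-y_{n_0}=0$, so the perturbation $\delta(k)(r_k-y_k)$ never touches the initial error, while $\|y_n-r^*\|_H$ contracts at the clean rate $1-\beta$. Your route proves a theorem of the same shape but with a strictly weaker decay exponent in the first term; to recover the stated constants you need the paper's splitting (or an equivalent device).

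The second issue is your diagnosis of the eligibility trace as an obstacle to Paulin's inequality. Corollary 2.10 of \cite{Paulin} (Theorem \ref{thm:Paulin}) is a bounded-differences inequality over the whole trajectory $(X_0,\dots,X_n)$: it only requires that changing one coordinate $x_i$ change $f$ by at most $c_i$. Because the trace weights $(\alpha\lambda)^{m-t}$ decay geometrically, perturbing a single $x_i$ in $\sum_m\sum_t(\alpha\lambda)^{m-t}\varphi_j(x_t)k(x_m)$ changes the sum by at most a constant $d_1=2\Phi_{max}k_{max}\bigl(1+\tfrac{2\alpha\lambda}{1-\alpha\lambda}\bigr)$ independent of $n$ and $i$, so the inequality applies verbatim with $\|c\|_2^2=(n+1)d_1^2$ and no truncation is needed. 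Relatedly, the exponent $(k+1)^{2\theta}$ in \eqref{main-thm-prob} is not the price of a truncation--fluctuation trade-off: it arises simply from substituting the decaying accuracy level $a=\epsilon/(k+1)^{1/2-\theta}$ into the bound $\exp(-(k+1)a^2/K)$, i.e.\ from demanding that $\|\varepsilon_1(k)\|_H,\|\varepsilon_2(k)\|_H$ shrink like $(k+1)^{-(1/2-\theta)}$ rather than merely stay below a constant. Your attribution of the $(n-1)^{-\mu_2}$ term to the deterministic bias is essentially right (it comes from the $O(1/(k+1))$ bias terms convolved with the stepsizes), but the claimed truncation machinery is an unnecessary detour that would only complicate the constants.
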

}

We conclude this section with some observations concerning the above theorem and its proof that follows.
\begin{enumerate}

\item  Note that our result is not a traditional sample complexity result that gives a high probability bound at some time $n$ sufficiently large starting from time zero, but a `tail' bound that is a high probability bound for $n \geq n_0$, for \textit{every} $n_0$ from a prescribed value onwards. Admittedly, the bound depends on $r_{n_0}$, but this is a.s.\ bounded because the iteration is a.s.\ convergent and a basic bound on its norm can be easily obtained using the discrete Gronwall inequality. A better result is to stitch our result with a classical sample complexity bound to get a `\textit{for all time}' result, which has been done for other algorithms in our earlier works \cite{Borkar-conc,Chandak}. Unfortunately a classical sample complexity bound seems unavailable for LSPE($\lambda$).

\item  \color{black} $\|r_{n_0}\|_H$ has a bound depending on $\|r_0\|_H$ which can be derived using discrete Gronwall's inequality. As an example, consider a stepsize of the form $a(n)=c/n$ for some $c>0$. Then for a large enough $\varrho$, we have $\|r_{n_0}\|_H = O(n_0^c)$. Even though our bound on $\|r_{n_0}\|_H$ increases with $n_0$, the bound on $\|r_n-r^*\|_H$ goes to $0$ as $n\uparrow \infty$.
\end{enumerate}
\section{Proof of the Main Result}\label{sec:proof}
{\color{black}
This section is structured as follows: we first present the proof of the main theorem. We use two lemmas for the proof which have been stated at the end of this section as they require notation defined in the proof.
}
\begin{proof}[Proof of Theorem \ref{main-thm}]
Define $y_n$ for $n\geq n_0$ by 
\begin{equation}\label{iteration_y_n}
y_{n+1}=y_n+a(n)B^{-1}\left(Ay_n+b\right),
\end{equation}
where $y_{n_0}=r_{n_0}$. The iteration for LSPE($\lambda$) is given by 
$$r_{n+1}=r_n+a(n)B_n^{-1}(A_nr_n+b_n).$$
Using the definition of $y_n$ from \eqref{iteration_y_n} and by adding and subtracting terms, we get
\begin{align*}
r_{n+1}-y_{n+1}=&\;(1-a(n))(r_n-y_n)\\
& +a(n)\left((I+B^{-1}A)(r_n-y_n)\right)\\
& +a(n)\Big(\left(B_n^{-1}-B^{-1}\right)A_n +B^{-1}\left(A_n-A\right)\Big)(r_n-y_n)\\
&+a(n)\Big(B_n^{-1}\left((A_n-A)y_n+(b_n-b)\right)+\left(B_n^{-1}-B^{-1}\right)(Ay_n+b)\Big).
\end{align*}

Define 
\begin{equation}\label{def:delta}
      \delta(n)\coloneqq(B_n^{-1}-B^{-1})A_n+B^{-1}(A_n-A),
 \end{equation}
 {\color{black}
\begin{equation}\label{def:eps1}
    \varepsilon_1(n)\coloneqq B_n^{-1}\left(A_n-A\right)+ \left(B_n^{-1}-B^{-1}\right)A,
\end{equation}
and 
\begin{equation}\label{def:eps2}
    \varepsilon_2(n)\coloneqq B_n^{-1}(b_n-b)+\left(B_n^{-1}-B^{-1}\right)b.
\end{equation}
Also define $\varepsilon(n)=\varepsilon_1(n)y_n+\varepsilon_2(n)$.}
For $n,m \geq 0$, let $\chi(n,m) = \prod_{k=m}^{n}(1-a(k))$ if $n \geq m$ and 1 otherwise. \textcolor{black}{Note that $r_{n_0}=y_{n_0}$. }Then iterating the above equation for $n_0 \leq m \leq n$, we obtain,
\begin{align*}
r_{m+1}-y_{m+1}=\;& \sum_{k=n_0}^m\chi(m,k+1)a(k)(I+B^{-1}A)(r_k-y_k)\nonumber\\
&+\sum_{k=n_0}^m\chi(m,k+1)a(k)\delta(k)(r_k-y_k)\nonumber\\
&+\sum_{k=n_0}^m\chi(m,k+1)a(k)\varepsilon(k). \nonumber
\end{align*}

By Lemma \ref{lemma:H-contra} of Appendix A,  $I+B^{-1}A$ is a contraction in the $\| \cdot \|_H$ with contraction factor $\beta  \in (0,1)$. Therefore we can write
\begin{align}
\|r_{m+1}-y_{m+1}\|_H \ \leq& \sum_{k=n_0}^m\chi(m,k+1)a(k)\beta\|r_k-y_k\|_H\nonumber\\
&+ \sum_{k=n_0}^m\chi(m,k+1)a(k)\|\delta(k)(r_k-y_k)\|_H\nonumber\\
&+ \sum_{k=n_0}^m\chi(m,k+1)a(k)\|\varepsilon(k)\|_H.\nonumber
\end{align}

{\color{black}

In what follows, the constants $\{K_i\}$ will be defined later. Lemma \ref{lemma:eps-delta}, stated at the end of this section, gives concentration bounds on $\|\epsilon(k)\|_H$ and $\|\delta(k)\|_H$. It shows that $\|\delta(k)\|_H \leq \delta + \frac{K_1}{k+1}$ holds with probability exceeding some $1-p_{1_k}$ where $$p_{1_k}=4M^2\exp\left(\frac{-(k+1)\delta^2}{M^2K_2}\right)$$ for some constants $K_1$ and $K_2$. Similarly it also shows that $\|\varepsilon_1(k)\|_H \leq \frac{\epsilon}{(k+1)^{1/2-\theta}} + \frac{K_5}{k+1}$ and $\|\varepsilon_2(k)\|_H \leq \frac{\epsilon}{(k+1)^{1/2-\theta}} + \frac{K_6}{k+1}$ together hold with probability greater than some $1-p_{2_k}$ where $$p_{2_k}=8M^2\exp\left(\frac{-(k+1)^{2\theta}\epsilon^2}{M^2K_4}\right)$$ for some constants $K_4, K_5$ and $K_6$. Recall that $\theta\in(0,1/2]$. This implies that, for some constant $K_3'$, $$\|\varepsilon(k)\|_H \leq K_3'\left(\frac{\epsilon}{(k+1)^{(1/2-\theta)}} + \frac{1}{k+1}\right)(\|r_{n_0}\|_H+1)$$ holds with probability greater than $1-p_{2_k}$, 
where we use the result from Lemma \ref{bound_y_n}. Then, for some constant $K_3$, 
$$\max_{n_0\leq k\leq n-1}\|\delta(k)\|_H\leq\delta+\frac{K_1}{n_{0}+1}$$ 
and 
\begin{align}\label{sum_of_eps}
\sum_{k=n_0}^m\chi(m,k+1)a(k)\|\varepsilon(k)\|_H \leq K_3\left(\|r_{n_0}\|_H+1\right)\left(\epsilon m^{1/2+\theta-\mu_2}+m^{-\mu_2}\right), \;\forall\; n_0\leq m \leq n-1
\end{align}
with probability exceeding $1-\sum_{k=n_0}^{n-1}(p_{1_k}+p_{2_k})$.  The second statement here uses the assumption that $\frac{\mu_1}{n}\leq a(n)\leq \mu_3\left(\frac{1}{n}\right)^{\mu_2}$ and is shown in Lemma \ref{lemma:eps-delta} (c). We select $\delta$ and $\varepsilon$ such that $\sum_{k=n_0}^{n-1}(p_{1_k}+p_{2_k}) <1 $. The values for these are specified in the statement of Lemma \ref{lemma:eps-delta}. For ease of notation, let $\zeta_m$ denote the right hand side of the inequality in (\ref{sum_of_eps}).
 
Then with probability exceeding  $1 - \sum_{k=n_0}^{n-1}(p_{1_k}+p_{2_k})$,  we have the following:
\begin{align}
\|r_{m+1}-y_{m+1}\|_H\leq \sum_{k=n_0}^m\chi(m,k+1)a(k)\left(\beta+\delta + \ \frac{K_1}{n_{0}+1}\right)\|r_k-y_k\|_H+\zeta_m.
\end{align}
}

Let $\omega_m =\sup_{n_{0}\leq k \leq m} \|r_k-y_k\|_H $ for $n_{0}\leq m \leq n$. Now, note that for any $0<k\leq m$,
$$\chi(m,k)+\chi(m,k+1)a(k)=\chi(m,k+1),$$ and hence
$$\chi(m,n_0)+\sum_{k={n_0}}^m\chi(m,k+1)a(k)=\chi(m,m+1)=1.$$ This implies that 
\begin{equation}\label{Bound_chi_a}
    \sum_{k=n_0}^m\chi(m,k+1)a(k) \ \leq \ 1.
\end{equation}

{\color{black}
Hence for $n_{0}\leq m \leq n$,
\begin{equation*}
\omega_{m+1} \leq 
\left(\beta + \delta + \frac{K_1}{n_{0}+1}\right)\omega_m + \zeta_m.
\end{equation*}

Since $\omega_m\leq \omega_{m+1}$, for all $n_{0}\leq m\leq n$,
\begin{equation*}
\omega_{m+1} \leq \frac{1}{1 - (\beta + \delta + \frac{K_1}{n_{0}+1})}\times
\zeta_m.
\end{equation*}
}
Using the fact that $Ar^*+b=0$, we have
\begin{align*}
y_{n+1}-r^*&=y_n-r^*+a(n)\left(B^{-1}(Ay_n+b)-B^{-1}(Ar^*+b)\right)\\
&= y_n-r^*+a(n)\left(B^{-1}A(y_n-r^*)\right)\\
&=(1-a(n))(y_n-r^*)+a(n)\left(I+B^{-1}A\right)(y_n-r^*).
\end{align*}
This implies that 
\begin{align*}
\|y_{n+1}-r^*\|_H&\leq(1-a(n))\|(y_n-r^*)\|_H+a(n)\beta\|y_n-r^*\|_H\\
&=(1-(1-\beta)a(n))\|y_n-r^*\|_H.
\end{align*}
We then have 
\begin{eqnarray*}
    \|y_n-r^*\|_H&\leq& \psi(n,n_0)\|y_{n_0}-r^*\|_H\\&\leq& e^{-(1-\beta)b_{n_0}(n-1)}\|r_{n_0}-r^*\|_H.
\end{eqnarray*}
Combining this with the fact that $\|r_n-y_n\|_H\leq \omega_n$, gives us the required result of part (a) of Theorem \ref{main-thm}. 

For part (b) of Theorem \ref{main-thm}, we use union bound to get bounds on $\varepsilon(k)$ and $\delta(k)$ for all $k\geq n_0$. 
\end{proof}

{\color{black}
\subsection{Lemmata}\label{subsec:lemmata}
Now we present the lemmas used in the proof above. The proofs for these have been postponed to \ref{app:proofs}. First we present a lemma adapted from \cite{Borkar-conc} which bounds the iterates $\{y_n\}$:
\begin{lemma}\label{bound_y_n}
$\sup_{n\geq n_0} \|y_n\|_H\leq \|y_{n_0}\|_H+\frac{\|B^{-1}b\|_H}{1-\beta}, a.s.$
\end{lemma}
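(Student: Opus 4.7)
The plan is a short induction based on the contraction of $N = I + B^{-1}A$ in the $\|\cdot\|_H$ norm. First I would rewrite the defining recursion
\begin{equation*}
y_{n+1} = y_n + a(n)B^{-1}(Ay_n + b) = (1-a(n))y_n + a(n)Ny_n + a(n)B^{-1}b,
\end{equation*}
which separates the iteration into a convex combination of $y_n$ and $Ny_n$ plus an affine forcing term $a(n)B^{-1}b$. Taking $\|\cdot\|_H$, using the triangle inequality, and invoking $\|Ny_n\|_H \le \beta\|y_n\|_H$ from Lemma \ref{lemma:H-contra} yields
\begin{equation*}
\|y_{n+1}\|_H \le \bigl(1-(1-\beta)a(n)\bigr)\|y_n\|_H + a(n)\|B^{-1}b\|_H,
\end{equation*}
where $1-(1-\beta)a(n)\in(0,1)$ since $a(n)\in(0,1)$ and $\beta\in(0,1)$.

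Next I would prove by induction on $n\ge n_0$ that $\|y_n\|_H \le C := \|y_{n_0}\|_H + \frac{\|B^{-1}b\|_H}{1-\beta}$. The base case $n=n_0$ is immediate. For the inductive step, assuming $\|y_n\|_H\le C$, the displayed inequality gives
\begin{equation*}
\|y_{n+1}\|_H \le C - a(n)\bigl[(1-\beta)C - \|B^{-1}b\|_H\bigr] = C - a(n)(1-\beta)\|y_{n_0}\|_H \le C,
\end{equation*}
where the equality uses the definition of $C$ to cancel the $\|B^{-1}b\|_H$ terms. Taking the supremum over $n\ge n_0$ completes the proof.

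The argument is essentially deterministic (the iteration \eqref{iteration_y_n} has no noise), so I do not anticipate any real obstacle; the only point worth checking is that the convex-combination rewriting genuinely needs $a(n)<1$, which was assumed in the setup just after \eqref{LSPE-update}. The "a.s." qualifier in the statement is inherited from the fact that $y_{n_0}=r_{n_0}$ depends on the sample path, but given $y_{n_0}$ the recursion is pathwise deterministic, so the bound holds pathwise (hence a.s.).
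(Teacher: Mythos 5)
Your proposal is correct and follows essentially the same route as the paper: the same rewriting of the recursion as a convex combination plus the forcing term $a(n)B^{-1}b$, the same one-step inequality $\|y_{n+1}\|_H \le (1-(1-\beta)a(n))\|y_n\|_H + a(n)\|B^{-1}b\|_H$ via the contraction of $I+B^{-1}A$, and an induction (the paper's inductive hypothesis carries the slightly sharper decaying factor $\psi(n,n_0)$ on $\|y_{n_0}\|_H$, but that refinement is discarded in the final statement, so your direct induction on the constant bound is equivalent for the lemma as stated).
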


Next we need the concentration bounds for $\|\varepsilon\|_H$ and $\|\delta\|_H$ used in the proof, and which specify the values of $p_{1_k}$ and $p_{2_k}$. Before this, we give high probability bounds on $\|b_n - b\|_H, \|B_n - B\|_H, \|A_n - A\|_H$ using the Markov chain concentration inequality in Theorem \ref{thm:Paulin} in Appendix A.

\begin{lemma}\label{lemma:conc_ineq}
The following holds for the positive definite matrix $H$ defined above,
\begin{enumerate}[(a)]
    \item For $b_n$, $b$ as defined in (\ref{def:b_n-and-z_m}) and (\ref{def:Aandb}) respectively,
    \begin{equation*}
        P\left(\|b_n-b\|_H \leq a+\frac{C_1}{n+1}\right)\geq 1 - 2M\exp\left(\frac{-(n+1)a^2}{M\gamma_1}\right)
    \end{equation*}
    where $a$ is any positive real and $C_1$ and $\gamma_1$ are positive constants.
    \item For $A_n$, $A$  defined in (\ref{def:A_n-and-B_n}) and (\ref{def:Aandb}) resp., 
    \begin{equation*}
    P\left(\|A_n-A\|_H \leq  a+\frac{C_2}{n+1}\right)\geq  1 - 2M^2\exp\left(\frac{-(n+1)a^2}{M^2\gamma_2}\right)
    \end{equation*}
    where $a$ is any positive real and $C_2$ and $\gamma_2$ are positive constants.
    \item For $B_n$, $B$ as defined in (\ref{def:A_n-and-B_n}),
    \begin{equation*}
        P\left(\|B_n-B\|_H \leq  a+\frac{C_3}{n+1}\right)\geq 1 - 2M^2\exp\left(\frac{-(n+1)a^2}{M^2\gamma_3}\right)
    \end{equation*}
    where $a$ is any positive real and $C_3$ and $\gamma_3$ are positive constants. As a consequence of this,
    \begin{eqnarray*}
    P\left(\|B_n^{-1}-B^{-1}\|_H\leq a+ \frac{C_4}{n+1} \right) \geq  1-2M^2\exp\left(\frac{-(n+1)a^2}{M^2(C^{'})^2\|B^{-1}\|_H^2\gamma_3}\right),
    \end{eqnarray*}
   where $a$ is a positive real and $C_4$ is a positive constant. Here $C^{'}$ is a constant such that $\|B_n^{-1}\|_H \leq C^{'}$ for all $n$ (Lemma 4.3 (a) from \cite{LSPEBert}). 
\end{enumerate}
\end{lemma}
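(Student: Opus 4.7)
My plan is to treat parts (a), (b), and (c) uniformly by expressing each coordinate (or matrix entry) of $b_n-b$, $A_n-A$, and $B_n-B$ as a time-normalized sum of a bounded functional of the Markov chain $\{X_m\}$, to which Paulin's concentration inequality (Theorem \ref{thm:Paulin}) can be applied. A union bound over the $M$ coordinates of $b_n-b$ and the $M^2$ entries of $A_n-A$ and $B_n-B$ produces the prefactors $2M$ and $2M^2$ in the stated tail probabilities. The bound on $\|B_n^{-1}-B^{-1}\|_H$ is then obtained from the resolvent identity
\[
B_n^{-1}-B^{-1} \;=\; B_n^{-1}(B-B_n)B^{-1},
\]
which, combined with the uniform bound $\|B_n^{-1}\|_H \leq C'$ from \cite{LSPEBert} and the finiteness of $\|B^{-1}\|_H$, reduces the inverse bound to the concentration bound on $\|B_n-B\|_H$ itself; rescaling the perturbation parameter $a$ by the factor $C'\|B^{-1}\|_H$ produces the exponent with $(C')^2\|B^{-1}\|_H^2$ in the denominator and absorbs the multiplicative constant into $C_4$.

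For the concentration step in (a), I would first swap the order of summation in the definition of $b_n$ to write
\[
b_n(i) \;=\; \frac{1}{n+1}\sum_{t=0}^{n}\varphi_i(X_t)\sum_{m=t}^{n}(\alpha\lambda)^{m-t}k(X_m),
\]
and compare each inner sum to the stationary analogue $\widetilde{W}_t := \varphi_i(X_t)\sum_{j=0}^{\infty}(\alpha\lambda)^j k(X_{t+j})$, whose expectation under $\pi$ equals $b(i)$. Since $\alpha\lambda<1$, the tail beyond any truncation horizon $L$ is uniformly bounded by $(\alpha\lambda)^{L+1}\Phi_{max}k_{max}/(1-\alpha\lambda)$, so I can choose $L$ independently of $n$ and replace $\widetilde{W}_t$ by a uniformly bounded function $\widetilde{W}_t^{(L)}$ of the block $(X_t,\ldots,X_{t+L})$. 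The aggregate of the truncation error, the boundary terms at $t>n-L$, and the initial-distribution correction contributes a deterministic offset of order $O(1/(n+1))$, yielding the $C_1/(n+1)$ term. Paulin's inequality applied to $\tfrac{1}{n+1}\sum_t \widetilde{W}_t^{(L)}$ then gives the exponential tail on the remaining fluctuation, and a union bound over $i=1,\ldots,M$ produces (a). Parts (b) and (c) follow the same recipe: for $A_n$ the block functional involves $z_m^{(L)}(i)[\alpha\varphi_j(X_{m+1})-\varphi_j(X_m)]$, and for $B_n$ it is $\varphi_i(X_m)\varphi_j(X_m)$, with the extra $\varrho I/(n+1)$ initialization of $B_n$ contributing only to the deterministic $C_3/(n+1)$ bias.

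The main obstacle I foresee is the technical bookkeeping in two places. First, Paulin's inequality is naturally stated for functionals depending on a single coordinate of the chain; extending it to the block functionals $\widetilde{W}_t^{(L)}$ requires either a standard subsampling argument or appeal to a block version of the inequality, and one must verify that the mixing-time constant appearing in the resulting bound remains independent of $n$ (for $L$ bounded). Second, the chain is not assumed to start from $\pi$, so a non-stationarity correction must be tracked and shown to be deterministic and of size $O(1/(n+1))$ so that it can be absorbed into the $C_i/(n+1)$ terms rather than inflating the sub-Gaussian constant. Once these points are handled, the numerical constants $C_1,\ldots,C_4$ and $\gamma_1,\gamma_2,\gamma_3$ can be read off from Paulin's constants together with the uniform bounds $\|\varphi\|_\infty \leq \Phi_{max}$, $\|k\|_\infty \leq k_{max}$, the equivalence of $\|\cdot\|_H$ with the Euclidean norm on $\mathbb{R}^M$, and the geometric factor $\alpha\lambda<1$.
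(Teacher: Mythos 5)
Your overall skeleton matches the paper's proof: split $\|b_n-b\|_H \le \|b_n-\mathbb{E}(b_n)\|_H + \|\mathbb{E}(b_n)-b\|_H$ (and likewise for $A_n$, $B_n$), put the bias into the deterministic $C_i/(n+1)$ term, get the fluctuation from Paulin's inequality plus a union bound over the $M$ (resp.\ $M^2$) coordinates together with the norm-equivalence Lemma \ref{lemma:vec-mat-bounds}, and derive the inverse bound from $B_n^{-1}-B^{-1}=B_n^{-1}(B-B_n)B^{-1}$ with $\|B_n^{-1}\|_H\le C'$. However, the step you flag as the ``main obstacle'' --- extending the inequality from single-coordinate functionals to block functionals $\widetilde W_t^{(L)}$ --- is not an obstacle, and your proposed workaround is where the argument goes astray. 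Theorem \ref{thm:Paulin} (Corollary 2.10 of Paulin) is a McDiarmid-type bounded-differences inequality for an \emph{arbitrary} function $f:S^{n+1}\to\mathbb{R}$ of the whole trajectory; no truncation, blocking, or subsampling is needed. The paper applies it directly to the double sum $f_j(x_0,\dots,x_n)=\sum_{m=0}^n\sum_{t=0}^m(\alpha\lambda)^{m-t}\varphi_j(x_t)k(x_m)$: changing a single coordinate $x_i$ perturbs $f_j$ by at most $d_1=2\Phi_{max}k_{max}\bigl(1+2\alpha\lambda/(1-\alpha\lambda)\bigr)$, uniformly in $i$ and $n$, because the geometric weights sum; this gives $\|c\|_2^2=(n+1)d_1^2$ and the $\exp(-c(n+1)a^2)$ tail immediately.

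Your truncation route, as written, also contains a quantitative error: with $L$ fixed independently of $n$, the truncation error is a \emph{constant} of size $(\alpha\lambda)^{L+1}\Phi_{max}k_{max}/(1-\alpha\lambda)$ per time index, so its average over $t$ remains a constant rather than $O(1/(n+1))$; to absorb it into $C_1/(n+1)$ you would need $L\sim\log n$, which then risks inflating the bounded-difference constants of the block functionals and degrading the exponent. All of this disappears once the inequality is used in its trajectory-functional form. One further difference worth noting: you center at the stationary value and must track a non-stationarity correction, whereas the paper centers at $\mathbb{E}(b_n)$ (so the concentration step needs no stationarity assumption at all) and quotes Lemma 4.3(d) of Nedi\'c--Bertsekas for $\|\mathbb{E}(b_n)-b\|=O(1/(n+1))$; the $\varrho I/(n+1)$ initialization of $B_n$ is likewise absorbed into the deterministic bias, as you say.
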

Similar to $C^{'}$, we also define $C_5$ as a constant such that $\|A_n\|_H\leq C_5$ for all $n$ (Lemma \ref{lemma:A_n-bound}, \ref{app:proofs}). We now present the final lemma which completes the proof for Theorem \ref{main-thm}. The constants $K_1, K_2,...$ here are specified later.
\begin{lemma}\label{lemma:eps-delta}
For the positive definite $H$ defined above,
\begin{enumerate}[(a)]
    \item For $\delta(n)$ as defined in (\ref{def:delta}),
    \begin{equation*}
        P\left(\|\delta(n)\|_H\leq \delta+ \frac{K_1}{(n+1)}\right)\geq 1 - 4M^2\exp\left(\frac{-(n+1)\delta^2}{M^2K_2}\right)
    \end{equation*}
    where $K_1$ and $K_2$ are positive constants.
\item For $\varepsilon_1(n)$ and $\varepsilon_2(n)$ as defined in (\ref{def:eps1}) and (\ref{def:eps2}), respectively, 
\begin{equation*}
\|\varepsilon_1(n)\|_H\leq \frac{\epsilon}{(n+1)^{1/2-\theta}}+\frac{K_5}{n+1}
\end{equation*}    
and 
\begin{equation*}
    \|\varepsilon_2(n)\|_H\leq \frac{\epsilon}{(n+1)^{1/2-\theta}}+\frac{K_6}{n+1}
\end{equation*}
together hold with probability exceeding $$1-8M^2\exp\left(\frac{-(n+1)^{2\theta}\epsilon^2}{M^2K_4}\right),$$
where $K_5, K_6$ and $K_4$ are positive constants.
\item For $n_0\leq m\leq n-1$,
\begin{align*}
\sum_{k=n_0}^m\chi(m,k+1)a(k)\|\varepsilon(k)\|_H \leq K_3\left(\|r_{n_0}\|_H+1\right)\left(\epsilon m^{1/2+\theta-\mu_2}+m^{-\mu_2}\right)
\end{align*}
holds with probability $1-8M^2\sum_{k=n_0}^m\exp\left(\frac{-(k+1)^{2\theta}\epsilon^2}{M^2K_4}\right)$.
\end{enumerate}
\end{lemma}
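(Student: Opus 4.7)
For part (a), I decompose $\delta(n) = (B_n^{-1}-B^{-1})A_n + B^{-1}(A_n-A)$ by the triangle inequality:
\begin{equation*}
\|\delta(n)\|_H \leq \|B_n^{-1} - B^{-1}\|_H \, \|A_n\|_H + \|B^{-1}\|_H \, \|A_n - A\|_H.
\end{equation*}
Since $\|A_n\|_H \leq C_5$ deterministically and $\|B^{-1}\|_H$ is a fixed constant, I apply Lemma \ref{lemma:conc_ineq}(c) to $\|B_n^{-1} - B^{-1}\|_H$ with deviation parameter $\delta/(2 C_5)$ and Lemma \ref{lemma:conc_ineq}(b) to $\|A_n - A\|_H$ with deviation parameter $\delta/(2\|B^{-1}\|_H)$. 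Adding the two resulting bounds produces a main term $\delta$ plus a residual of order $1/(n+1)$ absorbing the $C_2, C_4$ constants into $K_1$, while a union bound over the two events yields the $4M^2$ prefactor with $K_2$ chosen to dominate both resulting exponents.

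Part (b) uses the same idea. Writing $\varepsilon_1(n) = B_n^{-1}(A_n-A) + (B_n^{-1}-B^{-1})A$ and analogously for $\varepsilon_2(n)$, I bound
\begin{equation*}
\|\varepsilon_1(n)\|_H \leq \|B_n^{-1}\|_H \, \|A_n - A\|_H + \|A\|_H \, \|B_n^{-1} - B^{-1}\|_H,
\end{equation*}
use $\|B_n^{-1}\|_H \leq C'$, and select the deviation parameter in Lemma \ref{lemma:conc_ineq} to be of order $\epsilon/(n+1)^{1/2-\theta}$. Then $(n+1)a^2 = (n+1)^{2\theta}\epsilon^2$ up to constants, producing the target exponent $(n+1)^{2\theta}\epsilon^2/(M^2 K_4)$; the $1/(n+1)$ residuals collect into $K_5$ and $K_6$. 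Union-bounding the four events (two per $\varepsilon_i$, with the $b_n-b$ factor bounded using part (a) of Lemma \ref{lemma:conc_ineq} and inflating $2M$ to $2M^2$) gives the $8M^2$ prefactor.

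For part (c), I take a union bound of the part-(b) estimates across $k = n_0,\dots,m$, which yields exactly the claimed probability, and use Lemma \ref{bound_y_n} to bound $\|y_k\|_H \leq \|r_{n_0}\|_H + \|B^{-1}b\|_H/(1-\beta)$. Combined with $\varepsilon(k) = \varepsilon_1(k) y_k + \varepsilon_2(k)$, this gives $\|\varepsilon(k)\|_H \leq K_3'(\|r_{n_0}\|_H+1)\bigl(\epsilon(k+1)^{\theta-1/2}+(k+1)^{-1}\bigr)$. What remains is to estimate $\sum_{k=n_0}^m \chi(m,k+1)a(k)\|\varepsilon(k)\|_H$. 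Using $a(j)\geq \mu_1/j$ I bound $\chi(m,k+1) \leq \exp\bigl(-\mu_1\sum_{j=k+1}^m 1/j\bigr) \leq ((k+1)/(m+1))^{\mu_1}$, combine with $a(k)\leq \mu_3 k^{-\mu_2}$, and reduce each summand to $(m+1)^{-\mu_1}\sum_{k=n_0}^m (k+1)^{\mu_1-\mu_2+\alpha}$ for $\alpha\in\{\theta-1/2,-1\}$; integral comparison then yields the claimed orders $m^{1/2+\theta-\mu_2}$ and $m^{-\mu_2}$. The main obstacle is precisely this final step: the geometric-type factor $\chi(m,k+1)$ and the polynomial stepsize have to combine to produce exactly those exponents rather than merely a constant or a slower-decaying bound, so one must verify that the ``tracking'' contribution $m^{\mu_1-\mu_2+\alpha+1}$ dominates. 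Parts (a) and (b) are by comparison routine algebraic consequences of Lemma \ref{lemma:conc_ineq}.
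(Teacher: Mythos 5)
Your proposal follows essentially the same route as the paper's proof: the same triangle-inequality decompositions of $\delta(n)$, $\varepsilon_1(n)$, $\varepsilon_2(n)$ fed into Lemma \ref{lemma:conc_ineq} with deviation parameters $\delta/(2C_5)$, $\delta/(2\|B^{-1}\|_H)$ and $\epsilon/(n+1)^{1/2-\theta}$, the same union bounds giving the $4M^2$ and $8M^2$ prefactors, and in part (c) the same combination of Lemma \ref{bound_y_n}, the estimate $\chi(m,k+1)\leq\left((k+1)/m\right)^{\mu_1}$, $a(k)\leq\mu_3 k^{-\mu_2}$, and an integral comparison. The caveat you correctly flag --- that the summand exponents $\mu_1-\mu_2+\theta-1/2$ and $\mu_1-\mu_2-1$ must exceed $-1$ for the sums to be of order $m^{\mu_1+1/2+\theta-\mu_2}$ and $m^{\mu_1-\mu_2}$ rather than $O(1)$, so that the final rates are $m^{1/2+\theta-\mu_2}$ and $m^{-\mu_2}$ rather than $m^{-\mu_1}$ --- is likewise left implicit in the paper, which asserts the bounds with unspecified constants $K_3^*$, $K_3^{**}$ without restricting $\mu_1$ relative to $\mu_2$.
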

}

\section{Convergence}\label{sec:conv}

We show that Theorem \ref{main-thm} recovers the almost sure convergence of the iterates to $r^*$ from \cite{LSPEBert}. 

{\color{black}
\begin{corollary}
    $r_n\to r^*\;a.s.$
\end{corollary}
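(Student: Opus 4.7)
The plan is to apply Theorem \ref{main-thm} directly, letting $n_0 \to \infty$ and showing that both terms on the right-hand side of (\ref{main-thm-bound}) vanish as $n \to \infty$ on the high-probability event, whose probability can be pushed arbitrarily close to $1$.

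First I would fix an arbitrary small $\delta > 0$ and $\epsilon > 0$. For $n_0$ large, the hypothesis $\beta + \delta + K_1/(n_0+1) < 1$ of Theorem \ref{main-thm} is satisfied, so on an event $E_{n_0}$ whose probability is at least the quantity in (\ref{main-thm-prob}), the bound (\ref{main-thm-bound}) holds for every $n \geq n_0$. On $E_{n_0}$ I would take $n \to \infty$: the first term $e^{-(1-\beta)b_{n_0}(n-1)}\|r_{n_0}-r^*\|_H$ tends to $0$ because $b_{n_0}(n-1)=\sum_{k=n_0}^{n-1}a(k)\to\infty$ under the Robbins--Monro condition $\sum_n a(n)=\infty$ and because $\|r_{n_0}-r^*\|_H$ is a.s.\ finite; the second term vanishes because $\xi_n(\epsilon)=\epsilon(n-1)^{1/2+\theta-\mu_2}+(n-1)^{-\mu_2}\to 0$, the prefactor $\|r_{n_0}\|_H$ being a.s.\ finite and the denominator being a positive constant. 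Hence $\|r_n - r^*\|_H \to 0$ on $E_{n_0}$.

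Next I would argue that $P(E_{n_0}) \to 1$ as $n_0 \to \infty$. The probability in (\ref{main-thm-prob}) is $1 - 8M^2\sum_{k=n_0}^{\infty}(\exp(-(k+1)\delta^2/(M^2K_2)) + \exp(-(k+1)^{2\theta}\epsilon^2/(M^2K_4)))$, and each of these two series converges (the first is geometric, the second is a stretched-exponential tail which is summable because $\theta>0$); therefore their tails starting at $n_0$ tend to $0$ as $n_0\uparrow\infty$. Consequently the events $E_{n_0}$ form an increasing sequence (modulo null sets) with $P(E_{n_0})\uparrow 1$, so $P\bigl(\bigcup_{n_0\geq 1} E_{n_0}\bigr)=1$.

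Finally, on $\bigcup_{n_0} E_{n_0}$ there exists (sample-path dependent) $n_0$ such that the bound (\ref{main-thm-bound}) holds for all $n \geq n_0$, and by the first paragraph this forces $\|r_n-r^*\|_H \to 0$. Thus $P(r_n\to r^*)=1$, proving the corollary. No step here is particularly delicate; the only mild subtlety is observing that on each $E_{n_0}$ the random quantities $\|r_{n_0}\|_H$ and $\|r_{n_0}-r^*\|_H$ are a.s.\ finite (which follows from the a.s.\ boundedness of the iterates via the discrete Gronwall argument mentioned after Theorem \ref{main-thm}), so that letting $n\to\infty$ with $n_0$ fixed legitimately drives the right-hand side of (\ref{main-thm-bound}) to zero.
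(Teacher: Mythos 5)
Your proof is correct and follows essentially the same route as the paper: apply Theorem \ref{main-thm}, let $n\to\infty$ to make both terms of the bound vanish, and let $n_0\to\infty$ to push the probability of the good event to $1$. The only cosmetic difference is that the paper packages the second step via Borel--Cantelli along a subsequence $n_0(m)$ with failure probability $1/m^2$ and a prescribed rate $\hat b(m)\downarrow 0$ (which also yields its regret-type remark), whereas you argue directly from $P(E_{n_0})\to 1$; your aside that the $E_{n_0}$ are nested is unnecessary (and not literally true, since the bound itself changes with $n_0$), but the conclusion $P\bigl(\bigcup_{n_0}E_{n_0}\bigr)=1$ needs only $\sup_{n_0}P(E_{n_0})=1$, so nothing is lost.
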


\begin{proof}
Fix $\delta$ to be a constant which satisfies the assumptions of Theorem \ref{main-thm}. Also fix $\epsilon >0$. Define $\{\hat{b}(m)\}$ to be a sequence such that $0<\hat{b}(m)\downarrow 0$. Then choose $n_0=n_0(m)$ sufficiently large such that the probability in (\ref{main-thm-prob}) exceeds $1-\frac{1}{m^2}$. Pick $n_1(m)>n_0(m)$ large enough such that $$e^{-(1-\beta)b_{n_0(m)}(n_1(m)-1)}\|r_{n_0(m)}-r^{*}\|_{H}\leq \frac{\hat{b}(m)}{2},$$ and 
$$\frac{K_3\left(\|r_{n_0(m)}\|_H+1\right)\xi_{n_1(m)}(\epsilon)}{1-(\beta + \delta +  \frac{K_1}{n_0(m)+1})}\leq \frac{\hat{b}(m)}{2}.$$ This implies that 
$$\sum_m P\left(\sup_{n\geq n_1(m)} \|r_n-r^*\|_H>\hat{b}(m)\right)\leq \sum_m \frac{1}{m^2}< \infty.$$
By the Borel-Cantelli lemma, $\|r_{n_1(m)}-r^*\|_H\leq \hat{b}(m)$ for $m$ sufficiently large, a.s. Since $\hat{b}(m)\downarrow0$, it follows that $r_m\rightarrow r^*$ a.s.
\end{proof}

Thus $\{\hat{b}(m)\}$ also serves as a regret bound for the `cost' $\|r_n-r^*\|_H$, though possibly not the tightest one.
}

\section{Conclusions}\label{sec:conc}

In this work we provided a rigorous high probability concentration bound for the error in LSPE$(\lambda)$ algorithm for all time, from some time $n_0$ onwards, chosen suitably (essentially, so that the decreasing step sizes have become sufficiently small). The proof uses some novel tools including a concentration inequality from \cite{Paulin}. It seems very promising that analogous results would be possible for other algorithms of a similar flavor, such as zap Q-learning \citep{Meyn}.

{\color{black}
\appendix
\section{Some Preliminaries}\label{app:prelims}
We first present the key concentration inequality for Markov chains used throughout the proof.
\begin{theorem}\label{thm:Paulin}\citep[Corollary 2.10]{Paulin}
Let $X\coloneqq(X_0,\cdots,X_n)$ be a Markov chain, taking values in the state space $S$, transition probability matrix $P$ and stationary distribution $\pi$. Recall the total variation distance of two distributions $p$ and $q$ on a measurable space $(\Omega, \mathcal{F})$ given by 
$$
d_{TV}(p,q) \coloneqq \sup_{A\in \mathcal{F}}|p(A)-q(A)|.
$$
Suppose $P^n(x,\cdot)$ denote the distribution of $X_n$ conditioned on $X_0=x$. Then the  mixing time of the Markov chain $\tau(\epsilon)$ for $0 \leq \epsilon < 1$ is defined by:
$$
d(t) \coloneqq \sup_{x \in S} d_{TV}(P^t(x,\cdot),\pi), \;\;\;\;\;\ \tau(\epsilon) \coloneqq \min\{t: d(t)\leq \epsilon\}
$$

Let 
$$\tau_{min}\coloneqq \inf_{0\leq\epsilon<1}\tau(\epsilon)\left(\frac{2-\epsilon}{1-\epsilon}\right)^2.$$

Let $f: S^{n+1}\rightarrow \mathbb{R}$ and $c\coloneqq (c_0,\cdots,c_n)\in\mathbb{R}^{n+1}_{+}$ satisfy the following property: for every $i=0,\cdots,n$ and every $x=(x_0,\cdots,x_n), x'=(x'_0,\cdots,x'_n)\in S^{n+1}$ that differ only in the $i$\textsuperscript{th} coordinate (i.e., $x_j=x'_j$ for all $j\neq i$), we have
$$|f(x)-f(x')|\leq c_i.$$
Then for any $t\geq0$ ,
\begin{equation}
    \mathbb{P}(|f(X)-\mathbb{E}(f(X))|\geq t)\leq2\exp\left(\frac{-2t^2}{\|c\|_2^2\tau_{min}}\right).
\end{equation}
\end{theorem}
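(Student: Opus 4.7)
The plan is to derive this as a Markov-chain analogue of McDiarmid's bounded-differences inequality, obtained by martingale methods plus a \emph{Marton coupling} that controls how a perturbation in one coordinate propagates through the chain. First I would set up the Doob martingale $M_k \coloneqq \mathbb{E}[f(X)\mid X_0,\dots,X_k]$, with $M_{-1}=\mathbb{E}f(X)$ and $M_n=f(X)$, so that the target probability is a tail bound for the martingale sum $\sum_{k=0}^{n}(M_k-M_{k-1})$ and the proof reduces to applying Azuma--Hoeffding once the martingale increments are controlled.

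The next step is to upgrade the bounded-differences constants $c_k$ to effective constants $\tilde c_k$ that absorb the dependence among coordinates. Conditioning on a history $(X_0,\dots,X_{k-1})=(x_0,\dots,x_{k-1})$, two candidate values $x_k,x_k'$ for $X_k$ induce two conditional laws on $(X_{k+1},\dots,X_n)$, which by the Markov property are driven by $P^{j-k}(x_k,\cdot)$ and $P^{j-k}(x_k',\cdot)$ respectively. Taking a maximal coupling $(Y_j,Y_j')_{j>k}$ and using the bounded-difference hypothesis coordinatewise gives
\[
\bigl|M_k(x_0,\dots,x_k)-M_k(x_0,\dots,x_k')\bigr|\;\le\; c_k \;+\; \sum_{j=k+1}^{n} c_j\,\mathbb{P}(Y_j\neq Y_j').
\]
Since $\mathbb{P}(Y_j\neq Y_j')\le d_{TV}(P^{j-k}(x_k,\cdot),P^{j-k}(x_k',\cdot))\le 2\,d(j-k)$, I obtain $\tilde c_k \le c_k + 2\sum_{j>k}c_j\,d(j-k)$. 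Organising this as $\tilde c = \Gamma c$, where $\Gamma$ is the lower-triangular ``mixing matrix'' with $\Gamma_{kk}=1$ and $\Gamma_{kj}=2d(j-k)$ for $j>k$, Azuma--Hoeffding then yields a tail bound of the form
\[
\mathbb{P}(|f(X)-\mathbb{E}f(X)|\ge t)\;\le\;2\exp\!\left(\frac{-2t^2}{\|\Gamma c\|_2^2}\right)\;\le\;2\exp\!\left(\frac{-2t^2}{\|\Gamma\|_{op}^2\,\|c\|_2^2}\right).
\]

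The remaining task, and the main obstacle, is to identify the operator-norm factor $\|\Gamma\|_{op}^2$ with $\tau_{min}$. The key ingredient is submultiplicativity of the mixing function, $d(s+t)\le 2\,d(s)d(t)$, which I would prove by the standard coupling-from-stationarity argument. This yields geometric decay $d(m\tau(\epsilon))\le\epsilon^{m}$ for any $\epsilon\in[0,1)$ and hence $\sum_{t\ge 1}d(t)\le \tau(\epsilon)/(1-\epsilon)$. Bounding $\|\Gamma\|_{op}$ by its maximum absolute row/column sum (Schur test) gives $\|\Gamma\|_{op}\le 1+2\sum_{t\ge 1}d(t) \le (2-\epsilon)/(1-\epsilon)\cdot \sqrt{\tau(\epsilon)}$ after a careful accounting of the $\epsilon=0$ entry; squaring and infimising over $\epsilon$ produces exactly the $\tau_{min}$ from the statement. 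Combining with the Azuma bound above completes the proof.

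The delicate points I expect to spend the most time on are (i) verifying submultiplicativity of $d(\cdot)$ with the sharp constant $2$, which is what allows geometric summation of the coupling probabilities, and (ii) tightening the operator-norm estimate so that the constant matches $\left(\frac{2-\epsilon}{1-\epsilon}\right)^2$ rather than a looser proxy; in particular, one must avoid losing a factor of $2$ when passing from total-variation distance between $P^t(x,\cdot)$ and $P^t(x',\cdot)$ to the coupling probability, which requires using $\pi$ as an intermediate law and the triangle inequality carefully. Everything else is standard martingale book-keeping.
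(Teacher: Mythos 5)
The paper does not actually prove this statement: it is imported verbatim as Corollary 2.10 of \cite{Paulin} and used as a black box, so the only meaningful comparison is with Paulin's own argument. Your architecture --- Doob martingale, coupling control of the increment oscillations, an operator-norm bound on a triangular mixing matrix, then Azuma--Hoeffding --- is exactly the architecture of that argument, and your first two steps (the oscillation bound $\tilde c_k \le c_k + \sum_{j>k} c_j\,\mathbb{P}(Y_j\neq Y_j')$ via a coupling of the two conditional trajectory laws) are sound, modulo the usual care needed to exhibit one coupling that is simultaneously near-maximal for every $j$.

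The final step, however, has a genuine gap: the inequality $1+2\sum_{t\ge 1} d(t) \le \frac{2-\epsilon}{1-\epsilon}\sqrt{\tau(\epsilon)}$ that you use to identify $\|\Gamma\|_{op}^2$ with $\tau_{min}$ is false in general. For a slowly mixing chain, $d(t)$ can stay close to $1$ for all $t<\tau(\epsilon)$, so the row sum $1+2\sum_{t\ge1}d(t)$ is itself of order $\tau(\epsilon)$ and the Schur test gives $\|\Gamma\|_{op}^2$ of order $\tau(\epsilon)^2$ --- quadratically worse than the claimed $\tau(\epsilon)\left(\frac{2-\epsilon}{1-\epsilon}\right)^2$. (There is also a smaller slip: $d(m\tau(\epsilon))\le\epsilon^m$ does not follow from $d(s+t)\le 2\,d(s)d(t)$, which only yields $(2\epsilon)^m/2$; the genuinely submultiplicative quantity is $\bar{d}(t)\coloneqq\sup_{x,y}d_{TV}(P^t(x,\cdot),P^t(y,\cdot))$, and even that only gives $\bar{d}(\tau(\epsilon))\le 2\epsilon$.) The missing idea, which is the heart of Paulin's proof, is \emph{blocking}: partition $\{0,\dots,n\}$ into consecutive blocks of length $\tau(\epsilon)$ and build the Marton coupling at the level of blocks. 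The block mixing matrix then has unit diagonal, first superdiagonal bounded by $1$, and subsequent superdiagonals decaying geometrically in $\epsilon$, so its row and column sums are at most $1+\sum_{m\ge0}\epsilon^m=\frac{2-\epsilon}{1-\epsilon}$; the single remaining factor of $\tau(\epsilon)$ enters separately, through Cauchy--Schwarz within each block, $\left(\sum_{i\in B}c_i\right)^2\le \tau(\epsilon)\sum_{i\in B}c_i^2$, when the block-level bounded-difference constants are converted back to $\|c\|_2^2$. Without this blocking device your computation proves only a weaker inequality in which $\tau_{min}$ is replaced by $\left(1+2\sum_{t\ge1}d(t)\right)^2$.
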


The following lemma proves existence of the matrix $H$ above and shows that $N$ is a contraction under the $H$-norm.
\begin{lemma}\label{lemma:H-contra}
For $N=I+B^{-1}A$, there exists a unique positive definite matrix $H\in\mathbb{R}^{M\times M}$ such that $H$ satisfies $N^THN-H=-I$. Also, the matrix $N$ is a contraction under the norm induced by the matrix $H$.
\end{lemma}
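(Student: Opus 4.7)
The plan is to invoke the standard discrete Lyapunov theorem: a matrix $N$ is Schur stable (spectral radius less than one) if and only if for every positive definite $Q$ the equation $N^T H N - H = -Q$ has a unique positive definite solution $H$. Since it was already recalled from \cite{LSPEBertBor} that every eigenvalue of $N = I + B^{-1}A$ has magnitude strictly less than one, applying this theorem with $Q = I$ yields both existence and uniqueness. For self-containment I would construct $H$ explicitly as the Neumann-type series
\begin{equation*}
H \;=\; \sum_{k=0}^{\infty} (N^T)^k\, N^k,
\end{equation*}
whose convergence follows from $\rho(N)<1$ via Gelfand's formula (so that $\|N^k\|$ decays geometrically). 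Positive definiteness is immediate because every summand is positive semidefinite and the $k=0$ term is $I$. A direct telescoping substitution shows $N^T H N - H = -I$: multiplying the series on the left by $N^T$ and on the right by $N$ shifts the summation index and cancels everything except the $k=0$ term.

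For uniqueness, suppose $H_1$ and $H_2$ are two positive definite solutions and let $\tilde H = H_1 - H_2$. Then $N^T \tilde H\, N = \tilde H$, and iterating gives $\tilde H = (N^T)^k \tilde H\, N^k$ for every $k$; letting $k \to \infty$ and using $\|N^k\|\to 0$ forces $\tilde H = 0$.

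For the contraction claim, the Lyapunov identity yields, for any $x \in \mathbb{R}^M$,
\begin{equation*}
\|Nx\|_H^2 \;=\; x^T N^T H N x \;=\; x^T(H - I)x \;=\; \|x\|_H^2 - \|x\|^2.
\end{equation*}
Since $H$ is positive definite, $\|x\|_H^2 \leq \lambda_{\max}(H)\,\|x\|^2$, so $\|x\|^2 \geq \|x\|_H^2/\lambda_{\max}(H)$. Substituting gives
\begin{equation*}
\|Nx\|_H^2 \;\leq\; \Bigl(1 - \tfrac{1}{\lambda_{\max}(H)}\Bigr)\|x\|_H^2,
\end{equation*}
so $N$ is an $H$-norm contraction with factor $\beta \coloneqq \sqrt{1 - 1/\lambda_{\max}(H)} \in (0,1)$, as claimed.

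The only genuinely non-routine ingredient is the spectral fact $\rho(N)<1$, and that is cited from \cite{LSPEBertBor}; everything else is bookkeeping around the discrete Lyapunov equation. The one mild subtlety I would be careful about is not to conflate $\|\cdot\|_H$ with the standard Euclidean norm when invoking the eigenvalue bounds on $H$, since the contraction factor emerges precisely from the ratio of these two norms.
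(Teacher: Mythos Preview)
Your proof is correct and follows essentially the same route as the paper's: both invoke the discrete Lyapunov theorem (the paper cites it from \cite{Kailath}, while you supply the explicit series construction $H=\sum_{k\ge 0}(N^T)^kN^k$ and the telescoping/uniqueness arguments), and both derive the contraction from the identity $\|Nx\|_H^2=\|x\|_H^2-\|x\|^2$. The only cosmetic difference is that the paper bounds the ratio via the eigendecomposition of $H$ (using $H\ge I$ so that each $\lambda_i\ge 1$), whereas you go through $\|x\|^2\ge\|x\|_H^2/\lambda_{\max}(H)$ to obtain the explicit factor $\beta=\sqrt{1-1/\lambda_{\max}(H)}$; these are equivalent, and your version has the minor advantage of giving a concrete contraction constant.
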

\begin{proof}
We know that $N$ has all eigenvalues less than $1$. It is well known (see, e.g., \cite{Kailath}) that a matrix $N \in \mathbb{R}^{M\times M}$ is asymptotically stable (i.e., all eigenvalues lie within the open unit disc) if and only if for every positive definite matrix $R\in \mathbb{R}^{M\times M}$, there exits a unique positive definite matrix $H \in \mathbb{R}^{M\times M}$ satisfying the Lyapunov equation
\begin{equation}\label{Lia}
    N^THN - H = -R.
\end{equation}
For any $x \in \mathbb{R}^{M}$, 
\begin{align*}
&x^TN^THNx = x^THx - x^TRx \nonumber \\
\implies &\|Nx\|^2_H =\|x\|^2_H - x^TRx.
\end{align*}
Here $\|\cdot\|_H$ for a positive definite matrix $H$ denotes the norm $\|x\|_H = \sqrt{x^THx}$. Let $R = I$ in above equation and let $(\lambda_i > 0, v_i)$ denote the eigenvalue-eigenvector pairs of $H$. Then since $H > I$ in the usual order for positive definite matrices (as is obvious from (\ref{Lia}) with $R = I$), we have $\lambda_i \geq 1 \ \forall i$. Hence
\begin{eqnarray*}
\lefteqn{\beta \coloneqq \max_{\|x\| \neq 0}\frac{\|Nx\|_H}{\|x\|_H} = \max_{\|x\| \neq 0}\frac{\sqrt{x^TN^THNx}}{\sqrt{x^THx}}} \\
&=&  \max_{\|x\| \neq 0}\frac{\sqrt{x^T(H - I)x}}{\sqrt{x^THx}} \\
&=& \max_{\|x\| \neq 0}\frac{\sqrt{\sum_i(\lambda_i - 1)\langle x, v_i\rangle^2}}{\sqrt{\sum_i\lambda_i\langle x, v_i\rangle^2}} \ < \ 1.
\end{eqnarray*}
We conclude that $N$ is a contraction w.r.t.\ the norm induced  by $H$, specifically $\|Nx\|_{H}\leq \beta \|x\|_{H} $ for some $\beta\in(0,1)$.
\end{proof}

Next, we present a lemma containing some linear algebraic preliminaries, which are used to bound the norm induced by $H$ given bounds on the infinity norm.
\begin{lemma}\label{lemma:vec-mat-bounds}
For a positive-definite matrix $H$, 
\begin{enumerate}[(a)]  
    \item if $u \in \mathbb{R}^M$ with $\|u\|_\infty \leq a$, then
    $$ \|u\|_{H} \leq \sqrt{\lambda_{max}(H)M}a,$$
    \item if $V \in \mathbb{R}^{M\times M}$ with $\|V\|_\infty \leq a$, then 
$$\|V\|_{H} \leq \sqrt{\frac{\lambda_{max}(H)}{\lambda_{min}(H)}}Ma.$$
\end{enumerate}
\end{lemma}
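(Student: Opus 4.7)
The plan is to reduce both inequalities to elementary comparisons between the $H$-norm, the Euclidean $2$-norm, and the infinity norm on $\mathbb{R}^M$ (respectively $\mathbb{R}^{M \times M}$). Since $H$ is positive definite, the Rayleigh quotient characterization gives
$$\lambda_{min}(H)\|x\|_2^2 \ \leq \ \|x\|_H^2 \ = \ x^THx \ \leq \ \lambda_{max}(H)\|x\|_2^2$$
for every $x \in \mathbb{R}^M$, and this sandwich is the only analytic fact I will need.

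For part (a), I would apply the upper Rayleigh bound to write $\|u\|_H \leq \sqrt{\lambda_{max}(H)}\,\|u\|_2$, and then use the standard vector norm comparison $\|u\|_2 \leq \sqrt{M}\,\|u\|_\infty \leq \sqrt{M}\,a$. Multiplying the two bounds gives the claim immediately.

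For part (b), I would expand $\|V\|_H = \sup_{x \neq 0} \|Vx\|_H / \|x\|_H$. The numerator is bounded using the upper Rayleigh inequality and the submultiplicativity of the spectral norm: $\|Vx\|_H \leq \sqrt{\lambda_{max}(H)}\,\|Vx\|_2 \leq \sqrt{\lambda_{max}(H)}\,\|V\|_2\,\|x\|_2$. Since every entry of $V$ is bounded by $a$ in absolute value, the Frobenius bound yields $\|V\|_2 \leq \|V\|_F \leq \sqrt{M^2 a^2} = Ma$. The denominator is bounded below using the lower Rayleigh inequality, $\|x\|_H \geq \sqrt{\lambda_{min}(H)}\,\|x\|_2$. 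Taking the ratio, the $\|x\|_2$ factors cancel and I obtain $\|V\|_H \leq \sqrt{\lambda_{max}(H)/\lambda_{min}(H)}\,M a$.

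There is no substantive obstacle here; both statements are purely linear-algebraic consequences of norm equivalence constants in finite dimensions. The only point that requires a small amount of care is the convention for $\|V\|_\infty$: to get the bound $\|V\|_F \leq Ma$ it must be read as the entrywise maximum (consistent with how $\|\Phi\|_\infty$ is used in Section \ref{sec:main}), not as the induced $\ell^\infty \to \ell^\infty$ operator norm, which would give the weaker bound $\|V\|_2 \leq \sqrt{M}\,\|V\|_\infty$ and a different final constant.
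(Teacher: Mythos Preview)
Your proposal is correct and follows essentially the same route as the paper: both parts use the Rayleigh bounds $\lambda_{min}(H)\|x\|_2^2 \leq x^THx \leq \lambda_{max}(H)\|x\|_2^2$, the vector comparison $\|u\|_2 \leq \sqrt{M}\,\|u\|_\infty$ for (a), and the chain $\|V\|_2 \leq \|V\|_F \leq Ma$ for (b). Your remark that $\|V\|_\infty$ must be read as the entrywise maximum (rather than the induced $\ell^\infty$ operator norm) is exactly right and matches the paper's convention.
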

\begin{proof}
\begin{enumerate}[(a)]  
    \item For any vector $u$,
    $$
    \|u\|_{H}^2 \ = \ u^THu \leq \lambda_{max}(H)\|u\|_{2}^2  \ \leq \ \lambda_{max}(H)Ma^2\nonumber
    $$
    $$
    \implies \ \ \|u\|_{H} \ \leq \ \sqrt{\lambda_{max}(H)M}a.
    $$
    \item For any non-zero vector $x \in \mathbb{R}^M$,
    \begin{eqnarray*}
    \frac{x^TV^THVx}{x^THx} \leq \frac{\lambda_{max}(H)\|Vx\|_{2}^2}{\lambda_{min}(H)\|x\|_2^2} \leq \frac{\lambda_{max}(H)}{\lambda_{min}(H)}\|V\|_{2}^2
\ \leq \ \frac{\lambda_{max}(H)}{\lambda_{min}(H)}\|V\|_{F}^2.
\end{eqnarray*}
In the last step we use the fact that for any matrix $Q$, $\|Q\|_2\leq\|Q\|_F$ where $\|.\|_F$ denotes the Frobenius norm of the matrix. Therefore
$$\|V\|_{H}^2 \leq  \frac{\lambda_{max}(H)}{\lambda_{min}(H)}\|V\|_{F}^2 \leq \frac{\lambda_{max}(H)}{\lambda_{min}(H)}M^2a^2$$
$$
\implies \|V\|_{H} \leq \sqrt{\frac{\lambda_{max}(H)}{\lambda_{min}(H)}}Ma.$$ 
\end{enumerate}
\end{proof}

\section{Technical Proofs}\label{app:proofs}
In this section, we present the proofs for concentration bounds given in Lemma \ref{lemma:conc_ineq} and Lemma \ref{lemma:eps-delta} and also a bound on $\|y_n\|_H$. 
\begin{proof}[\textbf{Proof for Lemma \ref{bound_y_n}}]
From the definition of $y_n$, we have
\begin{eqnarray*}
y_{n+1}&=&y_n+a(n)\big(\big(I+B^{-1}A\big)y_n \\
&&\;\;\;\;\;\;\;\;\;\;\;\;\;\;\;\;\;\;-y_n+B^{-1}b\big) \\
&=&\left(1-a(n)\right)y_n \\ &&\;\;\;+a(n)\left(\left(I+B^{-1}A\right)y_n+B^{-1}b\right).
\end{eqnarray*}
Here
\begin{eqnarray*}
\|y_{n+1}\|_H&\leq& \left(1-a(n)\right)\|y_n\|_H+a(n)\beta\|y_n\|_H+a(n)\|B^{-1}b\|_H \\
&=& (1-(1-\beta)a(n))\|y_n\|_H+a(n)\|B^{-1}b\|_H.
\end{eqnarray*}
For $n,m\geq n_0$, define $\psi(n,m)\coloneqq\prod_{k=m}^{n-1}(1-(1-\beta)a(k))$ if $n>m$ and $1$ otherwise. Note that, since $a(k)<1$, $0\leq\psi(n,m)\leq1$ for all $n,m\geq N$. Then
\begin{equation*}
\|y_{n+1}\|_H-\frac{\|B^{-1}b\|_H}{1-\beta}\leq (1-(1-\beta)a(n))\left(\|y_n\|_H-\frac{\|B^{-1}b\|_H}{1-\beta}\right).
\end{equation*}
Now $\|y_{n_0}\|_H\leq\|y_{n_0}\|_H+\frac{\|B^{-1}b\|_H}{1-\beta}=\psi(n_0,n_0)\|y_{n_0}\|_H+\frac{\|B^{-1}b\|_H}{1-\beta}$. Suppose
\begin{equation}\label{induction_step}
    \|y_n\|_H\leq\psi(n,n_0)\|y_{n_0}\|_H+\frac{\|B^{-1}b\|_H}{1-\alpha}
\end{equation}
for some $n\geq n_0$. Then,
\begin{align*}
\|y_{n+1}\|_H-\frac{\|B^{-1}b\|_H}{1-\beta}&\leq(1-(1-\beta)a(n))\Bigg(\psi(n,n_0)\|y_{n_0}\|_H+\frac{\|B^{-1}b\|_H}{1-\beta}-\frac{\|B^{-1}b\|_H}{1-\beta}\Bigg)\nonumber\\
&\leq\psi(n+1,n_0)\|y_{n_0}\|
\end{align*}
By induction, (\ref{induction_step}) holds for all $n\geq n_0$, which completes the proof of Lemma \ref{bound_y_n}. 
\end{proof}
We next prove the probabilistic bound on $\|b_n-b\|_H$. 
\begin{proof}[\textbf{Proof for Lemma \ref{lemma:conc_ineq} (a)}]
By triangle inequality, 
$$\|b_n-b\|_H\leq \|b_n-\mathbb{E}(b_n)\|_H + \|\mathbb{E}(b_n)-b\|_H.$$
The second term here is bounded by $\frac{C_1}{n+1}$ by Lemma 4.3 (d) from \cite{LSPEBert} where $C_1$ is a positive constant. We shall use Theorem \ref{thm:Paulin} in the Appendix to obtain a probabilistic bound on the first term. We know that 
$$b_n=\frac{1}{n+1}\sum_{m=0}^n\sum_{t=0}^m(\alpha\lambda)^{m-t}\varphi(X_t)k(X_m).$$
Let $f_j:S^{n+1}\rightarrow\mathbb{R}$ be defined as:
$$f_j(x_0,\cdots,x_n)\coloneqq\sum_{m=0}^n\sum_{t=0}^m(\alpha\lambda)^{m-t}\varphi_j(x_t)k(x_m).$$
Let $x=(x_0,\cdots,x_{i-1}, x_i, x_{i+1}, \cdots, x_n)$ and\\ $y=(x_0,\cdots,x_{i-1}, y_i, x_{i+1}, \cdots, x_n)$. Then,
\begin{align*}
 f_j(x) - f_j(y) =& \Big((\alpha\lambda)^i\varphi_j(x_0)+\cdots+(\alpha\lambda)\varphi_j(x_{i-1})\Big)\Big(k(x_i)-k(y_i)\Big)\\
 &+ \Big(\varphi_j(x_i)k(x_i)-\varphi_j(y_i)k(y_i)\Big)+\Big(\varphi_j(x_i)-\varphi_j(y_i)\Big)\Big((\alpha\lambda)k(x_{i+1})+\cdots+(\alpha\lambda)^{n-i}k(x_n)\Big).
\end{align*}
Then 
\begin{align*}
|f_j(x)-f_j(y)|&\leq 2\Phi_{max}k_{max}\Big((\alpha\lambda)\frac{1-(\alpha\lambda)^i}{1-(\alpha\lambda)}+1+(\alpha\lambda)\frac{1-(\alpha\lambda)^{n-i}}{1-(\alpha\lambda)}\Big)\\
&\leq 2\Phi_{max}k_{max}\left(1+2\frac{\alpha\lambda}{1-\alpha\lambda}\right)\\
&\eqqcolon& d_1.
\end{align*}
Taking $c=(c_0,\cdots,c_n)$ with $c_i=d_1 \ \forall i$, we can apply Theorem \ref{thm:Paulin} of the Appendix A to $f_j(\cdot)$ with $X=(X_0,\cdots, X_n)^T$. Here $\|c\|_2^2=(n+1)d_1^2$ and $\tau_{min}$ is as defined in Theorem \ref{thm:Paulin}. 
$$    \mathbb{P}(|f_j(X)-\mathbb{E}(f_j(X))|\geq t)\leq2\exp\left(\frac{-2t^2}{(n+1)d_1^2\tau_{min}}\right).$$ 
Substituting $t=(n+1)a_1$, we get
\begin{align*}
\mathbb{P}\Big(|f_j(X)-\mathbb{E}(f_j(X))|\geq (n+1)a_1 \Big)\leq
2\exp\left(\frac{-2(n+1)^2a_1^2}{(n+1)d_1^2\tau_{min}}\right)
\end{align*}
\begin{align*}
\implies \mathbb{P}\Big(\Big|\frac{1}{n+1}f_j(X)-\mathbb{E}\left(\frac{1}{n+1}f_j(X)\right)\Big|\geq a_1\Big)\leq
2\exp\left(\frac{-2(n+1)a_1^2}{d_1^2\tau_{min}}\right) 
\end{align*}
\begin{align*}
&\implies \mathbb{P}\Big(|b_{n_j}-\mathbb{E}(b_{n_j})|\geq a_1 \Big)\leq2\exp\left(\frac{-2(n+1)a_1^2}{d_1^2\tau_{min}}\right).
\end{align*}
Here $b_{n_j}$ denotes the $j$\textsuperscript{th} element of $b_n$. Using union bound, we now have
\begin{align*}
\mathbb{P}\Big(|b_{n_j}-\mathbb{E}(b_{n_j})|\leq a_1\ \forall j\in\{1,\cdots,M\}\Big)\geq1-2M\exp\left(\frac{-2(n+1)a_1^2}{d_1^2\tau_{min}}\right) 
\end{align*}
\begin{align*}
\implies \mathbb{P}\Big(\|b_n-\mathbb{E}(b_n)\|_H\leq \sqrt{\lambda_{max}(H)M}a_1 \Big)\geq
1-2M\exp\left(\frac{-2(n+1)a_1^2}{d_1^2\tau_{min}}\right)
\end{align*}
\begin{align*}
\implies \mathbb{P}\Big(\|b_n-\mathbb{E}(b_n)\|_H\leq a \Big)\geq
1-2M\exp\left(\frac{-2(n+1)a^2}{M\lambda_{max}(H)d_1^2\tau_{min}}\right).
\end{align*}
The second statement  is obtained by applying Lemma \ref{lemma:vec-mat-bounds} (a) to $b_n-\mathbb{E}(b_n).$ Define $\gamma_1\coloneqq\lambda_{max}(H)d_1^2\tau_{min}/2$. Then, 
$$\mathbb{P}\Big(\|b_n-\mathbb{E}(b_n)\|_H\leq a\Big)\geq 1-2M\exp\left(\frac{-(n+1)a^2}{M\gamma_1}\right),$$
which completes the proof for Lemma \ref{lemma:conc_ineq} (a).
\end{proof}
Next we give the proof for Lemma \ref{lemma:conc_ineq} (b), i.e., the bound on $\|A_n-A\|_H.$
\begin{proof}[\textbf{Proof for Lemma \ref{lemma:conc_ineq} (b)}]
By triangle inequality, 
$$\|A_n-A\|_H\leq \|A_n-\mathbb{E}(A_n)\|_H + \|\mathbb{E}(A_n)-A\|_H.$$
The second term here is bounded by $\frac{C_2}{n+1}$ using Lemma 4.3 (d) from \cite{LSPEBert} where $C_2$ is a positive constant. Recall that
$$A_n=\frac{1}{n+1}\sum_{m=0}^n\sum_{t=0}^m(\alpha\lambda)^{m-t}\varphi(X_t)(\alpha\varphi(X_{m+1})^T-\varphi(X_m)^T).$$

Let $f_{jl}:S^{n+2}\rightarrow\mathbb{R}$ be defined as:
\begin{align*}
f_{jl}(x_0,&\cdots,x_n,x_{n+1})\coloneqq\\
&\sum_{m=0}^n\sum_{t=0}^m(\alpha\lambda)^{m-t}\varphi_j(x_t)(\alpha\varphi_l(x_{m+1})-\varphi_l(x_m)).
\end{align*}
Let $x=(x_0,\cdots,x_{i-1}, x_i, x_{i+1}, \cdots, x_n,x_{n+1})$ and\\ $y=(x_0,\cdots,x_{i-1}, y_i, x_{i+1}, \cdots, x_n,x_{n+1})$. Then
\begin{align*}
f_{jl}(x)-f_{jl}(y)=&\Big((\alpha\lambda)^{i-1}\varphi_j(x_0)+\cdots+\varphi_j(x_{i-1})\Big)\Big(\alpha\varphi_l(x_i)-\alpha\varphi_l(y_i)\Big)\\
&+\Big(\alpha\varphi_j(x_i)-\alpha\varphi_j(y_i)\Big)\Big(\varphi_l(x_{i+1})+\cdots+(\alpha\lambda)^{n-i}\varphi_l(x_{n+1})\Big)\\
&-\Big((\alpha\lambda)^i\varphi_j(x_0)+\cdots+(\alpha\lambda)\varphi_j(x_{i-1})\Big)\Big(\varphi_l(x_i)-\varphi_l(y_i)\Big) \\
&- \Big(\varphi_j(x_i)\varphi_l(x_i)-\varphi_j(y_i)\varphi_l(y_i)\Big) \\
&-\Big(\varphi_j(x_i)-\varphi_j(y_i)\Big)\Big((\alpha\lambda)\varphi_l(x_{i+1})+\cdots+(\alpha\lambda)^{n-i}\varphi_l(x_n)\Big).
\end{align*}
Then, 
\begin{align*}
 |f_{jl}(x)-f_{jl}(y)| \leq \;& 2\Phi_{max}^2\bigg(\alpha\frac{1-(\alpha\lambda)^i}{1-(\alpha\lambda)} + \alpha\frac{1-(\alpha\lambda)^{n-i}}{1-(\alpha\lambda)} + (\alpha\lambda)\frac{1-(\alpha\lambda)^i}{1-(\alpha\lambda)} + 1+  (\alpha\lambda)\frac{1-(\alpha\lambda)^{n-i}}{1-(\alpha\lambda)}\bigg)\\
\leq\;& 2\Phi_{max}^2\left(1+2\frac{1+\alpha\lambda}{1-\alpha\lambda}\right) \ \eqqcolon \ d_3.
\end{align*}
Taking $c=(c_0,\cdots,c_{n+1})$ with $c_i=d_3 \ \forall i$, we can now apply Theorem \ref{thm:Paulin} to $f_{jl}(\cdot)$ with $X=(X_0,\cdots, X_{n+1})^T$. Here $\|c\|_2^2=(n+2)d_3^2$. 
\begin{align*}
    \mathbb{P}(|f_{jl}(X)-\mathbb{E}(f_{jl}(X))|\geq t)&\leq2\exp\left(\frac{-2t^2}{(n+2)d_3^2\tau_{min}}\right) \\
    &\leq2\exp\left(\frac{-2t^2}{(n+1)d_3^2\tau_{min}}\right). \\
\end{align*}    
As in the proof for the previous part, this implies
$$\mathbb{P}\Big(\|A_n-\mathbb{E}(A_n)\|_H\leq a \Big)\geq1-2M^2\exp\left(\frac{-(n+1)a^2}{M^2\gamma_2}\right).
$$
where $\gamma_2\coloneqq\lambda_{max}(H)d_3^2\tau_{min}/(2\lambda_{min}(H))$. This completes the proof for Lemma \ref{lemma:conc_ineq} (b).
\end{proof}

Continuing, we finally prove probabilistic bounds for $\|B_n-B\|_H$ and $\|B_n^{-1}-B^{-1}\|_H$. 
\begin{proof}[\textbf{Proof for Lemma \ref{lemma:conc_ineq} (c)}]
By triangle inequality, 
$$\|B_n-B\|_H\leq \|B_n-\mathbb{E}(B_n)\|_H + \|\mathbb{E}(B_n)-B\|_H.$$
Recall that 
$$B_n=\frac{1}{n+1}\sum_{m=0}^n\varphi(X_m)\varphi(X_m)^T.$$
Let $\kappa_n(i)$ denote the number of visits to state $i$ up to iteration $n$. Then 
\begin{align*}
  & B_n=\sum_{i\in S}\frac{\kappa_n(i)}{n+1}\varphi(i)\varphi(i)^T \\
   \implies & \mathbb{E}(B_n)=\sum_{i\in S}\mathbb{E}\left(\frac{\kappa_n(i)}{n+1}\right)\varphi(i)\varphi(i)^T.
\end{align*}
But 
$$B=\sum_{i\in S}\pi(i)\varphi(i)\varphi(i)^T,$$
and hence
$$\mathbb{E}(B_n)-B=\sum_{i\in S}\left(\mathbb{E}\left(\frac{\kappa_n(i)}{n+1}\right)-\pi(i)\right)\varphi(i)\varphi(i)^T.$$
Note that $\left|\mathbb{E}\left(\frac{\kappa_n(i)}{n+1}\right)-\pi(i)\right|$ is bounded by $\left(\frac{C_3}{n+1}\right)$ for some constant $C_3 > 0$  by Lemma 4.2 (c) from \cite{LSPEBert}. Combining this with Lemma \ref{lemma:vec-mat-bounds} (b) gives us:
$$\|\mathbb{E}(B_n)-B\|_H\leq \frac{C_3}{n+1}.$$
Now define $f_{jl}:S^{n+1}\rightarrow \mathbb{R}$ as:
$$f_{jl}(x_0,\cdots,x_n)\coloneqq\sum_{m=0}^{n}\varphi_j(x_m)\varphi_l(x_m)$$
Let $x=(x_0,\cdots,x_{i-1}, x_i, x_{i+1}, \cdots, x_n)$ and\\ $y=(x_0,\cdots,x_{i-1}, y_i, x_{i+1}, \cdots, x_n)$. Then
$$f_{jl}(x)-f_{jl}(y)=\varphi_j(x_i)\varphi_l(x_i)-\varphi_j(y_i)\varphi_l(y_i),$$
leading to
$$|f_{jl}(x)-f_{jl}(y)|\leq2\Phi_{max}^2.$$
Define $d_2\coloneqq2\Phi_{max}^2$, $c=(c_0,\cdots,c_n)$ with $c_i=d_2 \ \forall i$. We can  apply Theorem \ref{thm:Paulin} to $f_{jl}(\cdot)$ with $X=(X_0,\cdots, X_n)^T$ and $\|c\|_2^2=(n+1)d_2^2$ to get 
$$    \mathbb{P}(|f_{jl}(X)-\mathbb{E}(f_{jl}(X))|\geq t)\leq2\exp\left(\frac{-2t^2}{(n+1)d_2^2\tau_{min}}\right).$$ 
As in the proof of previous parts, this implies
$$ \mathbb{P}\Big(|B_{n_{jl}}-\mathbb{E}(B_{n_{jl}})|\geq a_1 \Big)\leq2\exp\left(\frac{-2(n+1)a_1^2}{d_2^2\tau_{min}}\right).$$
Using union bound and Lemma \ref{lemma:vec-mat-bounds} (b) for $B_n-\mathbb{E}(B_n)$, we obtain
\begin{align*}
\mathbb{P}\Big(|B_{n_{jl}}-\mathbb{E}(B_{n_{jl}})|\leq a_1\ \forall j,l\in\{1,\cdots,M\}\Big)\geq1-2M^2\exp\left(\frac{-2(n+1)a_1^2}{d_2^2\tau_{min}}\right) 
\end{align*}
\begin{align*}
    \implies \mathbb{P}\Big(\|B_n-\mathbb{E}(B_n)\|_H\leq \sqrt{\frac{\lambda_{max}(H)}{\lambda_{min}(H)}}Ma_1 \Big)\geq
    1-2M^2\exp\left(\frac{-2(n+1)a_1^2}{d_2^2\tau_{min}}\right)
\end{align*}
\begin{align*}
\implies \mathbb{P}\Big(\|B_n-&\mathbb{E}(B_n)\|_H\leq a \Big)\geq
1-2M^2\exp\left(\frac{-2\lambda_{min}(H)(n+1)a^2}{M^2\lambda_{max}(H)d_2^2\tau_{min}}\right).
\end{align*}
Define  $\gamma_3\coloneqq\lambda_{max}(H)d_2^2\tau_{min}/(2\lambda_{min}(H))$ to obtain
$$\mathbb{P}\Big(\|B_n-\mathbb{E}(B_n)\|_H\leq a \Big)\geq1-2M^2\exp\left(\frac{-(n+1)a^2}{M^2\gamma_3}\right).$$

This gives us a probabilistic bound on $\|B_n-B\|_H$. Now we use this to obtain the bound on $\|B_n^{-1}-B^{-1}\|_H$. For matrices $B_n$ and $B$, it can be seen that
$$
B_n^{-1}-B^{-1} = B_n^{-1}(B-B_n)B^{-1},
$$
which implies
$$
\|B_n^{-1}-B^{-1}\|_H \leq \|B_n^{-1}\|_H\|B-B_n\|_H\|B^{-1}\|_H.
$$
By Lemma 4.3 (a) from \cite{LSPEBert}, there exists a constant $C^{'}$ such that $\|B_n^{-1}\|_H \leq C^{'} \;\;\forall\;\; n$. Hence
$$ \|B_n^{-1}-B^{-1}\|_H \leq C^{'}\|B-B_n\|_H\|B^{-1}\|_H.
$$
Then,
\begin{align*}
    \mathbb{P}\Bigg(\|B_n-B\|_H \leq \frac{a}{C^{'}\|B^{-1}\|_H}+ \frac{C_3}{n+1} \Bigg)\geq
    1-2M^2\exp\left(\frac{-(n+1)a^2}{M^2(C^{'})^2\|B^{-1}\|_H^2\gamma_3}\right)
\end{align*}
\begin{align*}
\implies \mathbb{P}\Bigg(\|B_n^{-1}-B^{-1}\|_H\leq a+ \frac{C_4}{n+1} \Bigg)\geq
1-2M^2\exp\left(\frac{-(n+1)a^2}{M^2(C^{'})^2\|B^{-1}\|_H^2\gamma_3}\right).
\end{align*}
where $C_4 = C^{'}\|B^{-1}\|_HC_3$.
\end{proof}

The next lemma shows that $\|A_n\|_H$ is bounded for all $n$.
\begin{lemma}\label{lemma:A_n-bound}
There exists a constant $C_5$ such that $\|A_n\|_H \leq C_5$ for all $n$.
\end{lemma}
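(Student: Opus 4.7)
The plan is to prove this by a direct entrywise estimate, converting a sup-norm bound on $A_n$ to an $\|\cdot\|_H$-norm bound via Lemma \ref{lemma:vec-mat-bounds}(b). The key observation is that $A_n$ is an average of rank-one outer products whose entries are already uniformly bounded in $n$ deterministically, provided $\alpha\lambda < 1$.

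First I would bound the auxiliary vector $z_m$ entrywise. Since $z_m = \sum_{t=0}^{m}(\alpha\lambda)^{m-t}\varphi(X_t)$ and $|\varphi_j(X_t)| \leq \Phi_{max}$ for every coordinate $j$ and every state, the triangle inequality together with the geometric series bound gives
$$\|z_m\|_\infty \leq \Phi_{max}\sum_{t=0}^{m}(\alpha\lambda)^{m-t} \leq \frac{\Phi_{max}}{1-\alpha\lambda}$$
uniformly in $m$; here we use that $0 \leq \alpha\lambda < 1$ since $\alpha \in (0,1)$ and $\lambda \geq 0$ is chosen with $\alpha\lambda < 1$ in the standard LSPE setting. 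Similarly, the TD-style increment satisfies $\|\alpha\varphi(X_{m+1})-\varphi(X_m)\|_\infty \leq (1+\alpha)\Phi_{max}$.

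Next I would bound the entries of the outer product. For any $j,l \in \{1,\dots,M\}$, the $(j,l)$ entry of $z_m(\alpha\varphi(X_{m+1})-\varphi(X_m))^T$ is at most $\|z_m\|_\infty\cdot\|\alpha\varphi(X_{m+1})-\varphi(X_m)\|_\infty \leq (1+\alpha)\Phi_{max}^2/(1-\alpha\lambda)$. Averaging over $m = 0,\dots,n$ preserves this bound, so
$$\|A_n\|_\infty \leq \frac{(1+\alpha)\Phi_{max}^2}{1-\alpha\lambda} \quad \forall\,n.$$

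Finally I would convert this to an $H$-norm bound by invoking Lemma \ref{lemma:vec-mat-bounds}(b), which yields
$$\|A_n\|_H \leq \sqrt{\frac{\lambda_{max}(H)}{\lambda_{min}(H)}}\,M\,\frac{(1+\alpha)\Phi_{max}^2}{1-\alpha\lambda} \eqqcolon C_5,$$
which is the desired deterministic uniform bound. There is no real obstacle here: the calculation is a routine geometric-series estimate together with an application of the already-established norm-conversion lemma. The only point that must be flagged is the standing assumption $\alpha\lambda < 1$, without which the geometric series for $z_m$ would not converge uniformly in $m$ and the argument would fail.
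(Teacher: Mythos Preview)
Your proof is correct and follows essentially the same approach as the paper: both bound $z_m$ via the geometric series (using $\alpha\lambda<1$), bound the temporal-difference increment by $(1+\alpha)\Phi_{max}$, and average. The only cosmetic difference is that the paper works directly in the $\|\cdot\|_H$ norm throughout (bounding $\|z_m\|_H$ via $m_1=\max_i\|\varphi(i)\|_H$), whereas you first obtain an entrywise bound and then convert via Lemma~\ref{lemma:vec-mat-bounds}(b); the resulting constants differ but the argument is the same.
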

\begin{proof}
We have defined $z_m$ in (\ref{def:b_n-and-z_m}) as
$$ z_m = \sum_{t=0}^m (\alpha\lambda)^{m-t}\varphi(X_t).
$$
Let $m_1 = \max_{i \in S}\|\varphi(i)\|_H $. Then,
$$
\|z_m\|_{H} \leq \max_{i \in S}\|\varphi(i)\|_H \sum_{t=0}^m (\alpha\lambda)^{m-t} = m_1\frac{1-(\alpha\lambda)^m}{1-\alpha\lambda}.
$$
Therefore $\|z_m\|_{H} \leq \frac{m_1}{1-\alpha\lambda} \;\;\; \forall m $. Using this in (\ref{def:A_n-and-B_n}),
\begin{align*}
\|A_n\|_H \ &\leq \frac{1}{n+1}\sum_{m=0}^n\frac{m_1}{1-\alpha\lambda}\|(\alpha\varphi(X_{m+1})-\varphi(X_m))\|_{H}\nonumber\\
&\leq\frac{1}{n+1}\frac{m_1}{1-\alpha\lambda}\sum_{m=0}^n(\alpha+1)\max_{i \in S}\|\varphi(i)\|_H\nonumber\\
&\leq  m_1^2\frac{(1+ \alpha)}{1-\alpha\lambda} \eqqcolon C_5.
\end{align*}
\end{proof}

Next we present the proof for the bound on $\delta(n)$.
\begin{proof}[\textbf{Proof for Lemma \ref{lemma:eps-delta} (a)}]
We have
\begin{align*}
\|\delta(n)\|_{H} &\leq
\|(B_n^{-1}-B^{-1})A_n \|_{H} + \|B^{-1}(A_n-A)\|_{H}\nonumber\\
&\leq \|B_n^{-1}-B^{-1}\|_{H}\|A_n \|_{H} + \|B^{-1}\|_{H}\|A_n-A\|_{H}\nonumber\\
&\leq \|B_n^{-1}-B^{-1}\|_{H}C_5+ \|B^{-1}\|_{H}\|A_n-A\|_{H}.\nonumber
\end{align*}
By using part (c) of Lemma \ref{lemma:conc_ineq},
\begin{align*}
\mathbb{P}\Big(\|B_n^{-1}-B^{-1}\|_H \leq \frac{\delta}{2C_5}+ \frac{C_4}{n+1} \Big)\geq
1-2M^2\exp\left(\frac{-(n+1)\delta^2}{4M^2C_5^2(C^{'})^2\|B^{-1}\|_H^2\gamma_3}\right)\nonumber
\end{align*}
\newline
Using part (b) of Lemma \ref{lemma:conc_ineq},
\begin{align*}
\mathbb{P}\Big(\|A_n-A\|_H\leq \frac{\delta}{2\|B^{-1}\|_H} + \frac{C_2}{n+1} \Big)\geq
 1-2M^2\exp\left(\frac{-(n+1)\delta^2}{4M^2\|B^{-1}\|_H^2\gamma_2}\right).
\end{align*}
Combining the above inequalities using the union bound,
\begin{align*}
\mathbb{P}\Big(\|\delta(n)\|_H\leq \delta+ \frac{K_1}{(n+1)}\Big)\ &\geq  1  -2M^2\exp\left(\frac{-(n+1)\delta^2}{4M^2C_5^2(C^{'})^2\|B^{-1}\|_H^2\gamma_3}\right)-2M^2\exp\left(\frac{-(n+1)\delta^2}{4M^2\|B^{-1}\|_H^2\gamma_2}\right)\nonumber\\
&\geq 1 - 4M^2\exp\left(\frac{-(n+1)\delta^2}{M^2K_2}\right),\nonumber
\end{align*}
where $K_1 \coloneqq C_5C_4 + \|B^{-1}\|_HC_2$ and 
$K_2\coloneqq\max\left(4C_5^2(C^{'})^2\|B^{-1}\|_H^2\gamma_3,4\|B^{-1}\|_H^2\gamma_2\right).$
\end{proof}
Now we prove the bound on $\varepsilon(n)$.
\begin{proof}[\textbf{Proof for Lemma \ref{lemma:eps-delta} (b)}]
Recall that 
$$
\varepsilon_1(n)\coloneqq B_n^{-1}\left((A_n-A)\right)+ \left(B_n^{-1}-B^{-1}\right)A,
$$
and 
$$
\varepsilon_2(n)\coloneqq B_n^{-1}(b_n-b)+\left(B_n^{-1}-B^{-1}\right)b.
$$
Then using Lemma \ref{lemma:conc_ineq} (b) to bound $\|A_n-A\|_H$, and Lemma \ref{lemma:conc_ineq} (c) to bound $\|B_n^{-1}-B^{-1}\|_H$, we get
\begin{equation*}
P\left(\|\epsilon_1(n)\|_H\leq a+\frac{K_5}{n+1}\right) \geq 1-4M^2\exp\left(\frac{-(n+1)a^2}{M^2K_7}\right)
\end{equation*}
where $K_5=C_2C'+C_4\|A\|_H$ and 
$$K_7=\max\left\{4C'^2\gamma_2, 4C'^2\|B^{-1}\|_H^2\gamma_3\|A\|_H^2\right\}.$$ Similarly, using Lemma \ref{lemma:conc_ineq} (a) to bound $\|b_n-b\|_H$ and Lemma \ref{lemma:conc_ineq} (c) to bound $\|B_n^{-1}-B^{-1}\|_H$, we get
\begin{equation*}
P\left(\|\epsilon_2(n)\|_H\leq a+\frac{K_6}{n+1}\right) \geq 1-4M^2\exp\left(\frac{-(n+1)a^2}{M^2K_8}\right)
\end{equation*}
where $K_6=C_1C'+C_4\|b\|_H$ and 
$$K_8=\max\left\{4C'^2\gamma_1,4C'^2\|B^{-1}\|_H^2\gamma_3\|b\|_H^2 \right\}.$$ 
These imply that 
$$\|\epsilon_1(n)\|_H\leq a+\frac{K_5}{n+1}\; \textrm{and}\;\|\epsilon_2(n)\|_H\leq a+\frac{K_6}{n+1}$$ together hold with probability greater than $$1-8M^2\exp\left(\frac{-(n+1)a^2}{M^2K_4}\right),$$ where $K_4=\max\{K_7, K_8\}$. Substituting $a=\epsilon/(n+1)^{1/2-\theta}$ completes the proof.
\end{proof}

We now give a proof for the final part of this lemma.
\begin{proof}[\textbf{Proof of Lemma \ref{lemma:eps-delta} (c)}]
Note that Lemma \ref{lemma:eps-delta} (b) implies that $$\|\varepsilon(k)\|_H \leq K_3'\left(\frac{\epsilon}{(k+1)^{(1/2-\theta)}} + \frac{1}{k+1}\right)(\|r_{n_0}\|_H+1)$$ holds with probability greater than $$1-8M^2\exp\left(\frac{-(k+1)^{2\theta}\epsilon^2}{M^2K_4}\right),$$ where 
\begin{equation*}
K_3' =\max\Big\{2,2\|B^{-1}b\|_H/(1-\beta), K_5,K_5\|B^{-1}b\|_H/(1-\beta)+K_6\Big\}.
\end{equation*} 
Then under the assumption on stepsize $\{a(n)\}$, we have
\begin{eqnarray*}
\chi(m,k+1)&=&\prod_{\ell=k+1}^{\ell=m} (1-a(\ell))\leq \exp\left(-\sum_{\ell=k+1}^m a(\ell)\right)\\
&\leq& \exp\left(-\sum_{\ell=k+1}^m \frac{\mu_1}{\ell}\right)\leq \left(\frac{k+1}{m}\right)^{\mu_1}.
\end{eqnarray*}
Thus
\begin{align*}
\sum_{k=n_0}^m \frac{\chi(m,k+1)a(k)}{(k+1)^{1/2-\theta}}&\leq \mu_3\sum_{k=n_0}^m \left(\frac{k+1}{m}\right)^{\mu_1}\left(\frac{1}{k}\right)^{\mu_2}\frac{1}{(k+1)^{1/2-\theta}}\\
&\leq K_3^*m^{1/2+\theta-\mu_2}, 
\end{align*}
where $K_3^*$ is some positive constant. Similarly, for some positive $K_3^{**}$,
$$\sum_{k=n_0}^m \frac{\chi(m,k+1)a(k)}{k+1}\leq K_3^{**}m^{-\mu_2}.$$
Then, for $K_3=K_3'\max\{K_3^*,K_3^{**}\}$,
\begin{align*}
\sum_{k=n_0}^m\chi(m,k+1)a(k)\|\varepsilon(k)\|_H \leq K_3\left(\|r_{n_0}\|_H+1\right)\left(\epsilon m^{1/2+\theta-\mu_2}+m^{-\mu_2}\right),
\end{align*}
holds with probability greater than $$1-8M^2\sum_{k=n_0}^m\exp\left(\frac{-(k+1)^{2\theta}\epsilon^2}{M^2K_4}\right).$$ 
\end{proof}
}


\begin{thebibliography}{10}
\expandafter\ifx\csname url\endcsname\relax
  \def\url#1{\texttt{#1}}\fi
\expandafter\ifx\csname urlprefix\endcsname\relax\def\urlprefix{URL }\fi
\expandafter\ifx\csname href\endcsname\relax
  \def\href#1#2{#2} \def\path#1{#1}\fi

\bibitem{LSPEBert}
A.~Nedi\'{c}, D.~Bertsekas, Least squares policy evaluation algorithms with
  linear function approximation, Discrete Event Dynamic Systems 13 (2003)
  79--110.

\bibitem{LSPEBertBor}
D.~Bertsekas, A.~Nedich, V.~Borkar, Improved temporal difference methods with
  linear function approximation, John Wiley and Sons Inc., United States, 2004,
  Ch.~9, pp. 235--259.

\bibitem{YuHu2009}
H.~Yu, D.~P. Bertsekas, Convergence results for some temporal difference
  methods based on least squares, IEEE Transactions on Automatic Control 54~(7)
  (2009) 1515--1531.

\bibitem{Bhandari}
J.~Bhandari, D.~Russo, R.~Singal, A finite time analysis of temporal difference
  learning with linear function approximation, Operations Research 69~(3)
  (2021) 950--973.

\bibitem{Chen}
Z.~Chen, S.~T. Maguluri, S.~Shakkottai, K.~Shanmugam, Finite-sample analysis of
  contractive stochastic approximation using smooth convex envelopes, in:
  Advances in Neural Information Processing Systems, Vol.~33, Curran
  Associates, Inc., 2020, pp. 8223--8234.

\bibitem{Chen2}
Z.~Chen, S.~T. Maguluri, S.~Shakkottai, K.~Shanmugam,
  \href{https://arXiv:2102.01567}{A {L}yapunov theory for finite-sample
  guarantees of asynchronous {Q}-learning and {TD}-learning variants} (2021).
\newline\urlprefix\url{https://arXiv:2102.01567}

\bibitem{Giannakis}
G.~Wang, B.~Li, G.~B. Giannakis, A multistep {L}yapunov approach for
  finite-time analysis of biased stochastic approximation, arXiv preprint
  arXiv:1909.04299 (2019).

\bibitem{Gugan3}
G.~Dalal, G.~Thoppe, B.~Sz{\"o}r{\'e}nyi, S.~Mannor, Finite sample analysis of
  two-timescale stochastic approximation with applications to reinforcement
  learning, in: Conference On Learning Theory, PMLR, 2018, pp. 1199--1233.

\bibitem{Gugan2}
G.~Dalal, B.~Szorenyi, G.~Thoppe, S.~Mannor, Finite sample analyses for {TD}(0)
  with function approximation, in: Proceedings of the Thirty-Second AAAI
  Conference on Artificial Intelligence and Thirtieth Innovative Applications
  of Artificial Intelligence Conference and Eighth AAAI Symposium on
  Educational Advances in Artificial Intelligence, AAAI'18/IAAI'18/EAAI'18,
  AAAI Press, 2018, pp. 6144--6160.

\bibitem{Li-QL1}
G.~Li, Y.~Wei, Y.~Chi, Y.~Gu, Y.~Chen, Sample complexity of asynchronous
  {Q}-learning: Sharper analysis and variance reduction, Advances in neural
  information processing systems (2020).

\bibitem{Li-QL2}
G.~Li, C.~Cai, Y.~Chen, Y.~Gu, Y.~Wei, Y.~Chi, Is q-learning minimax optimal?
  {A} tight sample complexity analysis (2021).
\newblock \href {http://arxiv.org/abs/2102.06548} {\path{arXiv:2102.06548}}.

\bibitem{Mansour}
E.~Even-Dar, Y.~Mansour, Learning rates for {Q}-learning, Journal of Machine
  Learning Research 5 (2004) 1–25.

\bibitem{Prashanth}
L.~Prashanth, N.~Korda, R.~Munos, Concentration bounds for temporal difference
  learning with linear function approximation: the case of batch data and
  uniform sampling, Machine Learning 110~(3) (2021) 559--618.

\bibitem{Qu}
G.~Qu, A.~Wierman, Finite-time analysis of asynchronous stochastic
  approximation and {Q}-learning, in: Conference on Learning Theory, PMLR,
  2020, pp. 3185--3205.

\bibitem{Sidford}
A.~Sidford, M.~Wang, X.~Wu, L.~F. Yang, Y.~Ye, Near-optimal time and sample
  complexities for solving {M}arkov decision processes with a generative model,
  in: Proceedings of the 32nd International Conference on Neural Information
  Processing Systems, 2018, pp. 5192--5202.

\bibitem{Srikant}
R.~Srikant, L.~Ying, Finite-time error bounds for linear stochastic
  approximation andtd learning, in: Conference on Learning Theory, PMLR, 2019,
  pp. 2803--2830.

\bibitem{Wain1}
M.~J. Wainwright, Stochastic approximation with cone-contractive operators:
  Sharp $\ell_\infty $-bounds for {Q}-learning, arXiv preprint arXiv:1905.06265
  (2019).

\bibitem{Wain2}
M.~J. Wainwright, Variance-reduced {Q}-learning is minimax optimal (2019).
\newblock \href {https://doi.org/10.48550/ARXIV.1906.04697}
  {\path{doi:10.48550/ARXIV.1906.04697}}.

\bibitem{Chi_Jin}
C.~Jin, Z.~Allen-Zhu, S.~Bubeck, M.~I. Jordan, Is {Q}-learning provably
  efficient?, Advances in Neural Information Processing Systems 31 (2018)
  4863--4873.

\bibitem{Yang}
Z.~Yang, C.~Jin, Z.~Wang, M.~Wang, M.~I. Jordan, On function approximation in
  reinforcement learning: Optimism in the face of large state spaces, arXiv
  preprint arXiv:2011.04622 (2020).

\bibitem{Lin_Yang}
L.~Yang, M.~Wang, Sample-optimal parametric {Q}-learning using linearly
  additive features, in: International Conference on Machine Learning, PMLR,
  2019, pp. 6995--7004.

\bibitem{Borkar0}
V.~S. Borkar, On the lock-in probability of stochastic approximation,
  Combinatorics, Probability and Computing 11~(1) (2002) 11--20.

\bibitem{Borkar-conc}
V.~S. Borkar, A concentration bound for contractive stochastic approximation,
  Systems \& Control Letters 153 (2021) 104947.

\bibitem{Chandak}
S.~Chandak, V.~S. Borkar, P.~Dodhia, Concentration of contractive stochastic
  approximation and reinforcement learning, arXiv preprint arXiv:2106.14308
  (2021).

\bibitem{Gugan}
G.~Thoppe, V.~S. Borkar, A concentration bound for stochastic approximation via
  {A}lekseev’s formula, Stochastic Systems 9~(1) (2019) 1--26.

\bibitem{Kamal}
S.~Kamal, On the convergence, lock-in probability, and sample complexity of
  stochastic approximation, SIAM Journal on Control and Optimization 48~(8)
  (2010) 5178--5192.

\bibitem{BorkarQ}
V.~S. Borkar, On the number of samples required for {Q}-learning, in:
  Proceedings of the 38th Allerton Conference, Monticello, IL, 2000.

\bibitem{Paulin}
D.~Paulin, Concentration inequalities for {M}arkov chains by {M}arton couplings
  and spectral methods, Electronic {J}ournal of {P}robability 20~(79) (2015)
  1--32.

\bibitem{Bucsoniu}
L.~Bu{\c{s}}oniu, A.~Lazaric, M.~Ghavamzadeh, R.~Munos, R.~Babu{\v{s}}ka,
  B.~De~Schutter, Least-squares methods for policy iteration, Reinforcement
  learning (2012) 75--109.

\bibitem{lazaric12}
A.~Lazaric, M.~Ghavamzadeh, R.~Munos, Finite-sample analysis of least-squares
  policy iteration, Journal of Machine Learning Research 13 (2012) 3041--3074.

\bibitem{loan_charles}
C.~F. Van~Loan, G.~Golub, Matrix {C}omputations (4th ed.), The Johns Hopkins
  University Press, 2013.

\bibitem{Meyn}
A.~M. Devraj, S.~P. Meyn, Zap {Q}-learning, in: Proceedings of the 31st
  International Conference on Neural Information Processing Systems, 2017, pp.
  2232--2241.

\bibitem{Kailath}
T.~Kailath, Linear {S}ystems, Prentice-Hall Englewood Cliffs, NJ, 1980.

\end{thebibliography}
\end{document}